\def\colorful{1}
\newif\ifhyper\IfFileExists{hyperref.sty}{\hypertrue}{\hyperfalse}
\ifhyper\usepackage{hyperref}\fi
\def\nnewcolor{0}
\newcommand{\nnew}[1]{{\color{red} #1}}
\newcommand{\nnew}[1]{#1}
\newtheorem{theorem}{Theorem}[section]
\newtheorem{lemma}[theorem]{Lemma}
\newtheorem{informal theorem}[theorem]{Theorem (informal statement)}
\newtheorem{proposition}[theorem]{Proposition}
\newtheorem{fact}[theorem]{Fact}
\theoremstyle{definition}
\newtheorem{definition}[theorem]{Definition}
\newcommand{\eqdef}{\stackrel{{\mathrm {\footnotesize def}}}{=}}
\theoremstyle{problem}
\newtheorem{problem}[theorem]{Problem}
\newcommand\snorm[2]{\left\| #2 \right\|_{#1}}
\renewcommand\vec[1]{\mathbf{#1}}
\DeclareMathOperator*{\E}{\mathbf{E}}
\def\d{\mathrm{d}}
\newcommand{\normal}{\mathcal{N}}
\newcommand{\bx}{\mathbf{x}}
\newcommand{\by}{\mathbf{y}}
\newcommand{\bv}{\mathbf{v}}
\newcommand{\bw}{\mathbf{w}}
\newcommand{\Sp}{\mathbb{S}}
\newcommand{\x}{\mathbf{x}}
\newcommand{\R}{\mathbb{R}}
\newcommand{\Z}{\mathbb{Z}}
\newcommand{\eps}{\epsilon}
\newcommand{\pr}{\mathbf{Pr}}
\newcommand{\poly}{\mathrm{poly}}
\newcommand{\polylog}{\mathrm{polylog}}
\newcommand{\sgn}{\mathrm{sign}}
\newcommand{\sign}{\mathrm{sign}}
\newcommand{\opt}{\mathrm{OPT}}
\newcommand{\D}{\mathcal{D}}
\newcommand{\Ind}{\mathds{1}}
\newcommand{\1}{\Ind}
\newcommand{\dotp}[2]{\left\langle #1, #2 \right\rangle}
\newcommand{\relu}{\mathrm{ReLU}}
\newcommand{\citep}{\cite}
\title{Near-Optimal SQ Lower Bounds for Agnostically Learning Halfspaces and ReLUs
	under Gaussian Marginals}
\author{
Ilias Diakonikolas\thanks{Supported by NSF Award CCF-1652862 (CAREER), a Sloan Research Fellowship, and 
a DARPA Learning with Less Labels (LwLL) grant.}\\
University of Wisconsin-Madison\\
{\tt ilias@cs.wisc.edu}\\
\and
Daniel M. Kane\thanks{Supported by NSF Award CCF-1553288 (CAREER) and a Sloan Research Fellowship.}\\ University of California, San Diego
\\
{\tt dakane@cs.ucsd.edu}
\and
Nikos Zarifis\thanks{Supported in part by a DARPA  Learning with Less Labels (LwLL) grant.}\\
University of Wisconsin-Madison\\
{\tt zarifis@wisc.edu}\\
}
\begin{document}

\maketitle

\begin{abstract}
We study the fundamental problems of agnostically learning halfspaces and ReLUs under Gaussian marginals. 
In the former problem, given labeled examples $(\bx, y)$ from an unknown distribution on $\R^d \times \{ \pm 1\}$, 
whose marginal distribution on $\bx$ is the standard Gaussian and the labels $y$ can be arbitrary,
the goal is to output a hypothesis with 0-1 loss $\opt+\eps$, where $\opt$
is the 0-1 loss of the best-fitting halfspace. 
In the latter problem, given labeled examples $(\bx, y)$ from an unknown distribution on $\R^d \times \R$, 
whose marginal distribution on $\bx$ is the standard Gaussian and the labels $y$ can be arbitrary,
the goal is to output a hypothesis with square loss $\opt+\eps$, where $\opt$
is the square loss of the best-fitting ReLU.
We prove Statistical Query (SQ) lower bounds of $d^{\poly(1/\eps)}$ for both of these problems. 
Our SQ lower bounds provide strong evidence that current upper bounds for these tasks 
are essentially best possible.
\end{abstract}

\setcounter{page}{0}
\thispagestyle{empty}
\newpage

\section{Introduction} \label{sec:intro}

\subsection{Background and Problem Motivation} \label{ssec:background}
We study the fundamental problems of agnostically learning halfspaces
and ReLU regression in the distribution-specific agnostic PAC model.
In both of these problems, we are given i.i.d. samples from a joint distribution
$\D$ on labeled examples $(\bx, y)$, where $\bx \in \R^d$ is the example
and $y \in \R$ is the corresponding label, and the goal is to compute a hypothesis that
is competitive with the best-fitting halfspace (with respect to the 0-1 loss) or ReLU
(with respect to the square loss) respectively.

A halfspace (or Linear Threshold Function) is any Boolean function $f: \R^d \to \{ \pm 1\}$ of the form
$f(\bx) = \sgn \left(\langle \bw, \bx \rangle + \theta \right)$,
where $\bw \in \R^d$ is called the weight vector and $\theta$ is called the threshold.
(The function $\sign: \R \to \{ \pm 1\}$ is defined as $\sgn(u)=1$ if $u \geq 0$ and $\sgn(u)=-1$ otherwise.)
The task of learning an unknown halfspace from samples
is one of the oldest and most well-studied problems in machine learning,
starting with the Perceptron algorithm~\cite{Rosenblatt:58} and
leading to influential techniques, including SVMs~\cite{Vapnik:98}
and AdaBoost~\cite{FreundSchapire:97}. In the realizable setting~\cite{Valiant:84},
this learning problem amounts to linear programming and can be solved
in polynomial time (see, e.g.,~\cite{MT:94}) without distributional assumptions.
In contrast, in the distribution-independent agnostic model~\cite{Haussler:92, KSS:94},
even {\em weak} learning is computationally hard~\cite{GR:06, FGK+:06short, Daniely16}.

A line of research~\cite{KKMS:08, KLS09, ABL17, Daniely15, DKS18a}
has focused on learning halfspaces in the {\em distribution-specific} agnostic PAC model,
where it is assumed that the marginal distribution on the examples is well-behaved.
In this paper, we study the important case that the marginal distribution
is the standard Gaussian. For concreteness, we formally define this problem.

\begin{problem}[Agnostically Learning Halfspaces with Gaussian Marginals] \label{prob:LTF}
Let $\mathcal{C}_{\mathrm{LTF}}$ be the class of halfspaces on $\R^d$.
Given i.i.d. samples $(\bx, y)$ from a distribution $\mathcal{D}$ on $\R^d \times \{\pm 1\}$, where
the marginal $\D_{\bx}$ on $\R^d$ is the standard Gaussian $\mathcal{N}(\vec{0}, \vec{I})$ and no assumptions are
made on the labels $y$, the goal of the learning algorithm is to output a hypothesis $h: \R^d \to \{\pm 1\}$
such that with high probability we have $\pr_{(\bx, y) \sim \D} [h(\bx) \neq y] \leq \opt+\eps$,
where $\opt  = \inf_{f \in \mathcal{C}_{\mathrm{LTF}}} \pr_{(\bx, y) \sim \D} [f(\bx) \neq y]$.
\end{problem}

The $L_1$-regression algorithm of~\cite{KKMS:08} solves Problem~\ref{prob:LTF}
with sample complexity and running time $d^{O(1/\eps^2)}$~\cite{DGJ+10:bifh, DKN10}. This algorithm is also known to succeed for all log-concave distributions and for certain discrete distributions.
A related line of work~\cite{ABL17, Daniely15, DKS18a} has given
$\poly(d/\eps)$ time algorithms with weaker guarantees, specifically
with misclassification error $C \cdot \opt+\eps$, for a universal constant $C>1$.
The fastest known algorithm with optimal error is the one from~\cite{KKMS:08}.

A Rectified Linear Unit (ReLU) is any real-valued function $f: \R^d \to \R_+$ of the form
$f(\bx) = \relu \left(\langle \bw, \bx \rangle + \theta \right)$,
where $\bw \in \R^d$ is called the weight vector and $\theta$ is called the threshold.
(The function $\relu: \R \to \R_+$ is defined as $\relu(u)= \max \{ 0, u\}$.)
ReLUs are the most commonly used activation functions in modern deep neural networks.
Finding the best-fitting ReLU with respect to square-loss
is a fundamental primitive in the theory of neural networks. A number of recent works have
studied this problem both in terms of finding efficient algorithms
and proving hardness results \citep{Mahdi17, GoelKKT17, MR18, GoelKK19, DGKKS20}.
Similarly to the case of halfspaces, the problem is efficiently solvable in the realizable
setting and computationally hard in the distribution-independent
agnostic setting~\citep{MR18}. Here we study the case of Gaussian marginals,
which we now define.

\begin{problem}[ReLU Regression with Gaussian Marginals]\label{prob:ReLU}
Let $\mathcal{C}_{\relu}$ be the class of ReLUs on $\R^d$.
Given i.i.d. samples $(\bx, y)$ from a distribution $\mathcal{D}$ on $\R^d \times \R$, where
the marginal $\D_{\bx}$ on $\R^d$ is the standard Gaussian $\mathcal{N}(\vec{0}, \vec{I})$
and no assumptions are made on the labels $y$,
the goal of the learning algorithm is to output a hypothesis $h: \R^d \to \R$
such that with high probability we have $\E_{(\bx, y) \sim \D} [(h(\bx) - y)^2] \leq \opt+\eps$,
where $\opt  = \inf_{f \in \mathcal{C}_{\relu}} \E_{(\bx, y) \sim \D}[(f(\bx) - y)^2]$.
\end{problem}

Recent work~\cite{DGKKS20} gave an algorithm for Problem~\ref{prob:ReLU} with
sample complexity and runtime $d^{\poly(1/\eps)}$. While
$\poly(d/\eps)$ time algorithms are known with weaker guarantees~\cite{GoelKK19, DGKKS20},
the fastest known algorithm with $\opt+\eps$ error is the one of~\cite{DGKKS20}.

In terms of computational hardness, prior work has given evidence that
no $\poly(d, 1/\eps)$ time algorithm exists for either problem.
Specifically,~\cite{KlivansK14} gave a reduction from the problem of learning sparse parities with noise
to Problem~\ref{prob:LTF}.
Based on the presumed computational hardness of the former problem, this reduction
implies a computational lower bound of $d^{\Omega(\log(1/\eps))}$ for Problem~\ref{prob:LTF}.
More recently,~\cite{GoelKK19} gave a qualitatively similar reduction
implying a computational lower bound of $d^{\Omega(\log(1/\eps))}$ for Problem~\ref{prob:ReLU}.
Interestingly, both of these lower bounds cannot be improved in the sense that
the corresponding hard instances can be solved in time $d^{O(\log(1/\eps))}$.

In summary, the best known algorithms for Problems~\ref{prob:LTF} and~\ref{prob:ReLU}
have running time $d^{\poly(1/\eps)}$, while the best known
computational hardness results give $d^{\Omega(\log(1/\eps))}$ lower bounds.
This raises the following natural question:
\begin{center}
{\em What is the precise complexity of Problems~\ref{prob:LTF} and~\ref{prob:ReLU}?}
\end{center}
Given the lower bounds of~\cite{KlivansK14, GoelKK19}, it is conceivable that there exist
algorithms for these problems running in time $d^{\polylog(1/\eps)}$, i.e., quasi-polynomial
in $1/\eps$. A positive result of this form would represent significant algorithmic progress
in the theory of PAC learning.

{\em In this paper, we show that the latter possibility is unlikely.}
Specifically, we prove Statistical Query (SQ) lower bounds of $d^{\poly(1/\eps)}$
for both Problems~\ref{prob:LTF} and~\ref{prob:ReLU}. Our SQ lower bounds provide
evidence that known algorithms for these problems are essentially best possible.

Before we state our contributions in detail, we give some background on
Statistical Query (SQ) algorithms. SQ algorithms are a broad class of algorithms
that  are only allowed to query expectations of bounded functions of the distribution
rather than directly access samples.
The SQ model was introduced by Kearns~\cite{Kearns:98} in the context of supervised learning
as a natural restriction of the PAC model~\cite{Valiant:84}. Subsequently, the SQ model
has been extensively studied in a plethora of contexts (see, e.g.,~\cite{Feldman16b} and references therein).

Formally, an SQ algorithm has access to the following oracle.

\begin{definition}[\textsc{STAT} Oracle] \label{def:stat-oracle}
Let $\D$ be a distribution over some domain $X$ and let $f: X \to [-1, 1]$.
A statistical query is a function $q: X \times [-1, 1] \to [-1, 1]$.  We define
\textsc{STAT}$(\tau)$ to be the oracle that given a query $q(\cdot, \cdot)$
outputs a value $v$ such that
$|v - \E_{\vec x \sim \D}\left[q(\vec x, f(\vec x))\right]| \leq \tau$,
where $\tau>0$ is the tolerance parameter of the query.
\end{definition}

We note that the class of SQ algorithms is rather general and captures most of the known
supervised learning algorithms. More broadly,
a wide range of known algorithmic techniques in machine learning
are known to be implementable using SQs. These include spectral techniques,
moment and tensor methods, local search (e.g., Expectation Maximization),
and many others (see, e.g.,~\cite{Feldman13, FeldmanGV17}).
For the supervised learning problems studied in this paper, all known
algorithms with non-trivial performance guarantees are SQ or are easily
implementable using SQs.

One can prove lower bounds on the complexity
of SQ algorithms via the notion of {\em Statistical Query (SQ) dimension}~\cite{BFJ+:94, Feldman13}.
A lower bound on the SQ dimension of a learning problem provides
an unconditional lower bound on the complexity of any SQ algorithm for the problem.

\subsection{Our Results and Techniques} \label{ssec:results}
We are now ready to formally state our main results.
For Problem~\ref{prob:LTF} we prove:

\begin{theorem}\label{thm:LTF}
Let $d \geq 1$ and $\eps \geq d^{-c}$, for some sufficiently small constant $c>0$.
Any SQ algorithm that agnostically learns halfspaces on $\R^d$
under Gaussian marginals within additive error $\eps>0$
requires at least $d^{c/\eps}$ many statistical queries to $\textsc{STAT}(d^{-c/\eps})$.
\end{theorem}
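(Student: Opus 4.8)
The plan is to reduce the learning problem to a distinguishing (hypothesis-testing) problem over a family of distributions with a hidden direction, and then apply the standard Statistical Query dimension machinery. For a unit vector $\bv\in\R^d$, let $\mathbf{P}_{\bv}$ be the distribution on $\R^d\times\{\pm1\}$ whose $\bx$-marginal is exactly $\mathcal N(\vec 0,\vec I)$ and whose labels satisfy $\E[y\mid\bx]=g(\langle\bv,\bx\rangle)$ for a one-dimensional function $g:\R\to[-1,1]$ to be chosen; let $\mathbf{P}_0$ be the reference distribution with $\bx\sim\mathcal N(\vec 0,\vec I)$ and $y$ uniform on $\{\pm1\}$ independent of $\bx$. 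Since $\mathbf{P}_{\bv}/\mathbf{P}_0=1+y\,g(\langle\bv,\bx\rangle)$, a short computation with the Mehler/Hermite identity gives, for unit vectors $\bv,\bv'$ with $\langle\bv,\bv'\rangle=\rho$, that $\E_{\mathbf{P}_0}\!\big[(\mathbf{P}_{\bv}/\mathbf{P}_0-1)(\mathbf{P}_{\bv'}/\mathbf{P}_0-1)\big]=\sum_{k\ge0}\hat g_k^2\,\rho^{k}$, where the $\hat g_k$ are the Hermite coefficients of $g$. Hence, if the first $m$ Hermite coefficients of $g$ vanish, this pairwise $\chi$-correlation is at most $\rho^{m}\|g\|_2^2\le\rho^{m}$, while $\chi(\mathbf{P}_{\bv},\mathbf{P}_{\bv})=\|g\|_2^2=O(1)$.

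The heart of the argument is the construction of the one-dimensional gadget $g$, which must simultaneously satisfy: (i) $\hat g_k=0$ for all $0\le k<m$, where $m$ is a suitable $\poly(1/\eps)$ (we will take $m=\Theta(1/\eps)$); (ii) $|g(z)|\le1$ everywhere, so that $\tfrac{1+y\,g(z)}{2}$ is a valid conditional law on $\{\pm1\}$; and (iii) there is a threshold $t$ with $\E_{z\sim\mathcal N(0,1)}[\,g(z)\,\sgn(z-t)\,]\ge4\eps$, which forces $\opt\le 1/2-2\eps$ for $\mathbf{P}_{\bv}$ (via the halfspace $\sgn(\langle\bv,\bx\rangle-t)$). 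There is a real tension here: a function with no low-degree Hermite mass correlates only weakly with any threshold (the Hermite tail of $\sgn(\cdot-t)$ decays, so (i) caps the quantity in (iii) at $O(m^{-1/4})$), and worse, a polynomial blows up in the tails, so $g$ cannot simply be a low-degree truncation of a threshold; but only $\poly(\eps)$ correlation is needed, so there is room. I would build $g$ as a degree-$\Theta(m)$ polynomial $q$ restricted to a bounded interval $[-a,a]$ and then rescaled to have sup-norm $1$, choosing the free coefficients of $q$ to satisfy the $m$ linear constraints $\int_{-a}^{a}q(z)\,z^{j}\,\gamma(z)\,dz=0$ for $0\le j<m$ (a nontrivial linear system once $\deg q\ge m$); this makes (i) hold \emph{exactly} and (ii) hold after normalization, and one shows that for appropriate $a$ and a suitable choice within the solution space the correlation in (iii) is $\Omega(1/m)=\Omega(\eps)$ for a threshold $t$ inside $[-a,a]$. (Alternatively, one may take $g=\sgn(q(z))$ for a polynomial $q$ of degree $\Theta(m)$, which is automatically bounded by $1$.)

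Given such a $g$, the remaining steps are standard. A packing argument yields $N=d^{\Omega(m)}$ unit vectors in $\R^d$ with pairwise inner products at most $\rho=O(\sqrt{m\log d/d})$; by the correlation identity above, all pairwise $\chi$-correlations among $\{\mathbf{P}_{\bv}\}$ are then at most $\rho^{m}=d^{-\Omega(m)}$. Feeding these parameters into the generic SQ lower bound for testing against $\mathbf{P}_0$ shows that any SQ algorithm that distinguishes a (worst-case) $\mathbf{P}_{\bv}$ from $\mathbf{P}_0$ needs at least $d^{\Omega(m)}$ queries to $\textsc{STAT}(d^{-\Omega(m)})$. Finally, any SQ learner for Problem~\ref{prob:LTF} gives such a distinguisher: run the learner to obtain a hypothesis $h$, issue one extra query estimating $\err(h)=\pr_{(\bx,y)}[h(\bx)\neq y]$ to tolerance $\eps/4$, and declare ``$\mathbf{P}_{\bv}$'' iff the estimate is below $1/2-\eps/2$ --- this is correct since on $\mathbf{P}_0$ every hypothesis has error exactly $1/2$, whereas on $\mathbf{P}_{\bv}$ the learner guarantees $\err(h)\le\opt+\eps\le1/2-\eps$. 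Taking $m=\Theta(1/\eps)$ and absorbing the hidden constants into one small constant $c$ gives the claimed bound of $d^{c/\eps}$ queries to $\textsc{STAT}(d^{-c/\eps})$.

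The main obstacle is parts (ii)--(iii) of the gadget construction together with the bookkeeping of constants: one needs a function that is bounded by $1$ everywhere, has all of its first $m\asymp1/\eps$ Hermite coefficients \emph{exactly} zero (mere approximate vanishing is useless, since a degree-$0$ or degree-$1$ component of size even $2^{-\Omega(m)}$ would dominate the target $\chi$-correlation $d^{-\Omega(m)}$), and still correlates with a one-dimensional halfspace at the $\Omega(\eps)$ level --- and the constants must be tracked precisely enough that $m=\Theta(1/\eps)$ is an admissible choice. Everything else (the Mehler correlation identity, the near-orthogonal packing, the SQ-dimension lemma, and the learning-to-testing reduction) is routine.
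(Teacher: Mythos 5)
Your overall architecture is sound and closely parallels the paper's: a one-dimensional moment-matching gadget placed along a hidden direction, a packing of $2^{d^{\Omega(1)}}$ nearly orthogonal unit vectors, pairwise correlation decaying like $\rho^{m}$, and a reduction from agnostic learning to weak learning/testing (your Hermite--Mehler computation and the testing-style reduction with an extra error-estimation query are legitimate substitutes for the paper's conditional-distribution calculation and its direct use of the SQ-dimension lemma). The problem is that the crux of the theorem --- the existence of the gadget itself --- is only asserted, not proved, and this is exactly where essentially all of the paper's technical work lives. You need a function $g:\R\to[-1,1]$ whose first $m=\Theta(1/\eps)$ Gaussian moments vanish \emph{exactly} and which correlates at least $\Omega(1/m)$ with some threshold $\sgn(z-t)$; the sharp $\Omega(1/m)$ is essential, since a weaker $1/\poly(m)$ bound would only yield $d^{(1/\eps)^{\Omega(1)}}$ (the ReLU-type bound) rather than the claimed $d^{c/\eps}$.

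Your primary suggestion (a polynomial $q$ on $[-a,a]$ satisfying the $m$ linear moment constraints, then rescaled to sup-norm $1$) makes properties (i)--(ii) easy but gives no argument that the threshold correlation survives the normalization: the constraint-satisfying solutions can have $\|q\|_{\infty}$ much larger than $\int q\,\sgn(\cdot-t)\,\phi$, and controlling this ratio to within a factor $O(m)$ is precisely the hard quantitative step (the paper's analogous Legendre-projection construction for ReLU, Proposition~\ref{prop:relu}, only achieves $1/\poly(k)$, which is why Theorem~\ref{thm:ReLU} has the exponent $\eps^{c'}$). Your parenthetical alternative, $g=\sgn(q(z))$ for a degree-$\Theta(m)$ polynomial, is in effect the paper's Proposition~\ref{prop:main_structural}: a $\{\pm1\}$-valued, $O(m)$-piecewise constant function with exactly vanishing moments. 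But its existence is not obtained by choosing coefficients of $q$ freely --- the vanishing moments are $m$ nonlinear conditions on the breakpoints, and the paper needs a genuine argument (a fine discretization as in Fact~\ref{fct:function}, a Vandermonde-based differential-equation argument, Lemma~\ref{lem:main_diff}, to reduce the number of pieces to $k+1$ while preserving the near-vanishing moments, and a compactness limit to make them exactly zero). Once one has such a piecewise-constant Boolean gadget, the $\Omega(1/m)$ correlation with a halfspace is then not something to be engineered separately: it follows automatically from having only $k+1$ pieces, by placing the threshold at a suitable breakpoint (Lemma~\ref{lem:corel_ltf}). So the gap in your proposal is concentrated in, and coincides with, the paper's main structural results; without an argument of that kind your conditions (i)--(iii) with $m=\Theta(1/\eps)$ remain unverified, and the rest of the proof cannot be instantiated.
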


Intuitively, the above statement says that any SQ algorithm for Problem~\ref{prob:LTF}
requires time at least $d^{\Omega(1/\eps)}$. This comes close to the known
upper bound of $d^{O(1/\eps^2)}$~\cite{KKMS:08} and exponentially improves
on the best known lower bound of $d^{\Omega(\log(1/\eps))}$~\cite{KlivansK14}.

For Problem~\ref{prob:ReLU} we prove:

\begin{theorem}\label{thm:ReLU}
There exist constants $c, c' >0$ such that the following holds:
For $d \geq 1$ and $\eps \geq d^{-c}$,
any SQ algorithm with excess error $\eps$  for ReLU regression on $\R^d$
under Gaussian marginals requires at least $d^{c/\eps^{c'}}$
many statistical queries to $\textsc{STAT}(d^{-c/\eps^{c'}})$.
\end{theorem}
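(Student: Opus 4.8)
The plan is to prove Theorem~\ref{thm:ReLU} by instantiating the standard hidden-direction SQ lower bound construction for Gaussian marginals, in close parallel with (and reusing the machinery of) the proof of Theorem~\ref{thm:LTF}. Let $m=m(\eps)$ be the largest integer for which $\mathrm{dist}_{L^1(\normal(0,1))}(\relu,\mathcal{P})\ge C\sqrt{\eps}$, where $\mathcal{P}$ denotes the univariate polynomials of degree $<m$ and $C$ is a large absolute constant; since this $L^1$-approximation error decays polynomially in $m$, this forces $m=\Theta(\eps^{-c'})$ for a constant $c'\in(0,1)$ (concretely $c'=1/(2a)$ if the error is $\asymp m^{-a}$). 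Fix a set $\mathcal{S}\subseteq\mathbb{S}^{d-1}$ of $2^{d^{\Omega(1)}}$ unit vectors with pairwise inner products at most $d^{-\Omega(1)}$ in absolute value. I will build a reference distribution $\D_0=\normal(\vec 0,\vec I)\times\mathrm{Unif}\{\pm 1\}$ on $\R^d\times\{\pm1\}$ and, for each $\bv\in\mathcal{S}$, a distribution $\D_\bv$ with $\bx$-marginal $\normal(\vec 0,\vec I)$ under which the label depends only on $t=\langle\bv,\bx\rangle$.

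The one-dimensional gadget is the heart of the argument and, as it turns out, is clean. Let $p^\ast\in\mathcal{P}$ be the best $L^1(\normal(0,1))$-approximation to $\relu$, set $\eta:=\|\relu-p^\ast\|_{L^1(\normal(0,1))}\ge C\sqrt\eps$, and let $\phi:=\sgn(\relu-p^\ast)$. The optimality condition for best $L^1$ approximation gives $\phi\perp\mathcal{P}$ in $L^2(\normal(0,1))$; also $\|\phi\|_\infty\le1$ and $\E_{t\sim\normal(0,1)}[\relu(t)\phi(t)]=\E[(\relu(t)-p^\ast(t))\phi(t)]=\eta$. Define $\D_\bv$ by drawing $t\sim\normal(0,1)$ and $\bx^\perp\sim\normal(\vec 0,\vec I)$ on $\bv^\perp$, setting $\bx=t\bv+\bx^\perp$, and letting $y=1$ with probability $(1+\phi(t))/2$, else $y=-1$. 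Then (i) the $\bx$- and $y$-marginals of $\D_\bv$ are $\normal(\vec 0,\vec I)$ and $\mathrm{Unif}\{\pm1\}$, matching $\D_0$; (ii) the density ratio is $d\D_\bv/d\D_0=1+y\,\phi(\langle\bv,\bx\rangle)$, whose non-constant part has no $\bx$-Hermite components of degree $<m$, so $\E_{\D_0}\big[(\tfrac{d\D_\bv}{d\D_0}-1)(\tfrac{d\D_{\bv'}}{d\D_0}-1)\big]=\sum_{k\ge m}\wh{\phi}(k)^2\langle\bv,\bv'\rangle^k\le|\langle\bv,\bv'\rangle|^m\le d^{-\Omega(m)}$ for $\bv\neq\bv'$ and $\le\|\phi\|_2^2\le1$ for $\bv=\bv'$; and (iii) every hypothesis has square loss $\ge1$ under $\D_0$ (since $y\perp\bx$ and $\E y=0$), whereas under $\D_\bv$ the ReLU $2\eta\cdot\relu(\langle\bv,\bx\rangle)$ has loss $\tfrac12(2\eta)^2-2(2\eta)\eta+1=1-2\eta^2$, so $\opt(\D_\bv)\le1-2\eta^2\le1-2C^2\eps$.

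Property (ii) is exactly what the generic SQ-dimension / hidden-direction lemma needs: for any query $q:\R^d\times[-1,1]\to[-1,1]$ we get $|\E_{\D_\bv}[q]-\E_{\D_0}[q]|>d^{-\Omega(m)}$ for at most a $d^{-\Omega(m)}$ fraction of $\bv\in\mathcal{S}$, hence distinguishing $\D_0$ from $\{\D_\bv\}_{\bv\in\mathcal{S}}$ needs $d^{\Omega(m)}$ queries to $\textsc{STAT}(d^{-\Omega(m)})$. To conclude, reduce learning to distinguishing in the usual way: run a putative SQ learner with excess error $\eps$, answering every query with its $\D_0$-value; as long as it makes fewer than $d^{\Omega(m)}$ queries, for all but a vanishing fraction of $\bv$ these are simultaneously valid $\D_\bv$-answers, so on those $\bv$ the learner must output a single $\bv$-independent hypothesis $h$, which we may clip to $[-1,1]$ (clipping toward the range of $y$ never increases square loss). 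Then $\E_{\D_\bv}[(h-y)^2]\le\opt(\D_\bv)+\eps\le1-2C^2\eps+\eps$, while $\E_{\D_\bv}[(h-y)^2]=\E_{\D_0}[(h-y)^2]-2\E_{\D_\bv}[h(\bx)y]\ge1-2\E_{\D_\bv}[h(\bx)y]$ (same marginals), so $\E_{\D_\bv}[h(\bx)y]\ge(C^2-\tfrac12)\eps=\Omega(\eps)$ for a $1-o(1)$ fraction of $\bv$. But $q(\bx,y)=h(\bx)y$ is a valid $[-1,1]$-valued query with $\E_{\D_0}[q]=0$, so by the correlation bound this holds for at most a $d^{-\Omega(m)}\ll1$ fraction of $\bv$ — a contradiction. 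Hence any SQ learner needs $d^{\Omega(m)}=d^{\Omega(\eps^{-c'})}$ queries to $\textsc{STAT}(d^{-\Omega(m)})$.

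The step that carries the real weight is \emph{not} this skeleton but pinning down the quantitative behavior of $\eta(m)=\mathrm{dist}_{L^1(\normal(0,1))}(\relu,\mathcal{P})$: its polynomial decay rate $m^{-a}$ fixes the exponent $c'$, and one must verify that the absolute constants line up so that $2\eta(m)^2$ exceeds $\eps$ by a constant factor at the chosen $m$, uniformly over $\eps\ge d^{-c}$ up to a small constant. A secondary point needing care is stating the hidden-direction SQ lower bound in a form that admits the extra discrete label coordinate and queries $q(\bx,y)$ depending on $y$; since our gadget has $\chi^2$-divergence from the product reference at most $\|\phi\|_\infty^2\le1$ and all of its mismatch lives in Hermite degrees $\ge m$, this is a routine but not entirely cosmetic adaptation of the Gaussian-marginal framework underlying Theorem~\ref{thm:LTF}.
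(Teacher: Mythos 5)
Your skeleton is sound, and it is a genuinely different route from the paper's: instead of the noiseless construction used here (an $O(k)$-piecewise-constant Boolean function with vanishing low-degree moments and nontrivial ReLU correlation, built via Legendre polynomials, probabilistic rounding, a breakpoint-reducing ODE argument and a compactness limit, then rescaled by $C(k)$ and fed into the function-class SQ-dimension machinery of Proposition~\ref{prop:LTF} and Lemma~\ref{lem:fieldman_lemma}), you plant noisy $\pm 1$ labels with bias $\phi(\langle \bv,\bx\rangle)$, where $\phi=\sgn(\relu-p^\ast)$ and $p^\ast$ is the best $L^1(\normal(0,1))$ polynomial approximant of degree $<m$. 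The duality trick is clean: it delivers boundedness, orthogonality to all polynomials of degree $<m$ (the agreement set $\{\relu=p^\ast\}$ is null), and ReLU-correlation exactly $\eta=\E[|\relu-p^\ast|]$ in one stroke, and your loss computations, the clipping step, and the correlation bound $\sum_{j\ge m}\wh{\phi}(j)^2\rho^j\le|\rho|^m$ are correct (note also $2\eta\,\relu(\langle\bv,\bx\rangle)=\relu(\langle 2\eta\bv,\bx\rangle)$, so your comparator really lies in the ReLU class). You rightly flag that you need the distributional (SDA-style) form of the SQ lower bound rather than the paper's correlational Lemma~\ref{lem:fieldman_lemma} for deterministic label functions; that adaptation is standard.

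The genuine gap is the quantitative heart, which you explicitly defer: a lower bound $\eta(m)\ge 1/\poly(m)$ on the $L^1(\normal(0,1))$ distance from $\relu$ to polynomials of degree $<m$. Without it you cannot choose $m=\Theta(\eps^{-c'})$ while keeping $\eta\ge C\sqrt{\eps}$, so the stated bound $d^{c/\eps^{c'}}$ does not follow; the direction you call ``known decay'' (that $\eta(m)\to 0$ polynomially fast) is the easy upper bound and is not what the argument needs. By $L^1$--$L^\infty$ duality, the missing lower bound is exactly equivalent to Proposition~\ref{prop:relu} of the paper: the existence of $f:\R\to[-1,1]$ with $\E_{z\sim\normal(0,1)}[f(z)z^t]=0$ for all $t\le k$ and $\E_{z\sim\normal(0,1)}[f(z)\relu(z)]\ge 1/\poly(k)$. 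That is precisely where the paper invests its technical effort (the Legendre-polynomial argument exploiting the kink of $\relu$ at $0$, yielding $\Omega(1/k^{20})$). So your proposal correctly, and by a lighter-weight mechanism, reduces Theorem~\ref{thm:ReLU} to this one-dimensional approximation-theoretic statement, but it does not prove that statement; as written, it assumes the paper's key lemma rather than establishing it.
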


Intuitively, the above statement says that any SQ algorithm for Problem~\ref{prob:ReLU}
requires time at least $d^{(1/\eps)^{\Omega(1)}}$. This qualitatively matches the
upper bound of $d^{\poly(1/\eps)}$~\cite{DGKKS20}, up to the degree of the polynomial,
and exponentially improves on the best known lower bound of $d^{\Omega(\log(1/\eps))}$~\cite{GoelKK19}.

\paragraph{Discussion.}
The reduction-based hardness of~\cite{KlivansK14, GoelKK19} imply
SQ lower bounds of $d^{\Omega(\log(1/\eps))}$ for both problems.
Our new SQ lower bounds are qualitatively optimal, nearly matching current algorithms.
Interestingly, for both problems, our results show a sharp separation
in the complexity of obtaining error $O(\opt)+\eps$ (which is $\poly(d/\eps)$)
versus optimal error $\opt+\eps$. In particular, our lower bounds suggest that
the accuracy-runtime tradeoff of known polynomial time approximation schemes (PTAS)
for these problems~\cite{Daniely15, DGKKS20} that achieve error $(1+\gamma)\opt+\eps$, for all $\gamma>0$,
in time $\poly(d^{\poly(1/\gamma)}, 1/\eps)$ is qualitatively best possible.

\paragraph{Technical Overview}
The starting point for our SQ lower bound
constructions is the framework of~\cite{DKS17-sq}. This work
establishes the following: Suppose we
have a one-dimensional distribution $A$ that matches its first $k$ moments
with the standard one-dimensional Gaussian. Consider the set of distributions $\{ \mathbf{P}_{\bv} \}$,
where $\bv$ is any unit vector, such that the projection of $\mathbf{P}_{\bv}$ in the $\bv$-direction
is equal to $A$ and in the orthogonal complement $\mathbf{P}_{\bv}$ is an independent standard Gaussian.
Then this set of distributions has SQ dimension $d^{\Omega(k)}$.
By known results (see, e.g.,~\cite{Feldman16b}) this implies that
distinguishing such a distribution from the standard Gaussian
or learning a distribution with better than $1/\poly(d^k)$ correlation
with such a distribution is hard in the SQ model.

To leverage the aforementioned result, a first hurdle that must be overcome
is adapting the results of~\cite{DKS17-sq} -- which apply to the unsupervised task of
learning distributions -- to the supervised task of learning functions. For Boolean
functions $F: \R^d \to \{ \pm 1\}$, this is relatively straightforward.
Essentially, sampling from the distribution $(\bx,F(\bx))$ is equivalent to
sampling from the conditional distributions of $\bx$ conditioned on $F(\bx)=1$
and on $F(\bx)=-1$ in a way that is easy to make rigorous in the SQ model.
To apply the techniques from~\cite{DKS17-sq}, we want to construct
a one-dimensional Boolean-valued function $f :\R \to \{ \pm 1\}$ such that
the conditional distributions match moments, or more conveniently so that
$\E_{z \sim \mathcal{N}(0, 1)}[f(z) \, z^i] = 0$ for all $0 \leq i \leq k$.
Given such a moment matching function, it is not hard to show that
the conditional distributions corresponding to the function
$F_{\bv}(\bx) = f(\langle \bx, \bv \rangle)$,
for a unit vector $\bv$, are of the type covered by~\cite{DKS17-sq}.

Of course, for such a construction to have any implications for the problems
of agnostically learning halfspaces/ReLUs, we require an additional key property:
We want to find a univariate Boolean-valued function $f$
that not only has this kind of matching moments property, but also
{\em correlates non-trivially} with a halfspace or ReLU. If we can guarantee
non-trivial correlation, agnostically learning the function $F_{\bv}$
with respect to this class will require having a {\em weak learner} for $F$
(and, for example, preventing the learning from just outputting the identically zero function).

To achieve both aforementioned goals, we use analytic tools
to show that there exists an $O(k)$-piecewise constant Boolean-valued
function $f$ with $k$ matching moments. Such a function has correlation
$\Omega(1/k)$ with some halfspace.
For the case or ReLUs, we use an analysis making use of Legendre polynomials to find a function
$f' : \R \to [-1,1]$ with vanishing first $k$ moments and non-trivial correlation with a ReLU.
We then show that this can be rounded to a Boolean-valued function with the same guarantees.

\paragraph{Concurrent Work} Concurrent and independent work~\cite{GGK20}
established qualitatively similar SQ lower bounds for agnostically learning halfspaces and ReLUs.
Their techniques are different than ours, building on prior SQ lower bounds for 
learning depth-$2$ neural networks~\cite{DKKZ20, GGJKK20}, and employing a reduction-based approach.

 \section{Preliminaries} \label{sec:prelims}

\paragraph {\bf Notation}
For $n \in \Z_+$, we denote $[n] \eqdef \{1, \ldots, n\}$ 
and $\mathbb{\overline R} \eqdef \R \cup\{\pm \infty\}$. 
For $\bx \in \R^d$, and $i \in [d]$,
$\bx_i$ denotes the $i$-th coordinate of $\bx$.
We will use $\langle \bx, \by \rangle$ for the inner product between $\bx, \by
\in \R^d$.  We will use $\E[X]$ for the expectation of random variable $X$
and $\pr[\mathcal{E}]$ for the probability of event $\mathcal{E}$.

Let $\vec e_i$ be the $i$-th standard basis vector in $\R^d$. 
Let $\mathcal{N}(0,1)$ denote the standard univariate Gaussian distribution 
and $\mathcal{N}(\vec 0,\vec I)$ denote the standard multivariate Gaussian distribution.
We will use $\phi$ to denote the pdf of the standard Gaussian. 

\paragraph{Correlation and Statistical Query Dimension}
To bound the complexity of SQ learning a concept class $\cal C$, 
we will use the standard notion of Statistical Query Dimension~\cite{BFJ+:94}. 

For $f,g: \R^d \to \R$, we define the correlation between $f$ and $g$ 
under the distribution $\D$ to be the expectation $\E_{\bx \sim \D}[f(\x)g(\x)]$.
To prove that the SQ dimension of $\cal C$ under the distribution $\D$ is large, 
we need to find a set of functions in the class that are nearly uncorrelated.

\begin{definition}[Statistical Query Dimension]
For a class of functions $\cal C$ and distribution $\D$, 
$\textsc{SQ-DIM}({\cal C},\D)=s$, if $s$ is the largest integer value 
for which there exist $s$ functions $f_1, f_2, \ldots, f_s \in\cal C$ 
such that for every $i\neq j$, it holds $|\E_{\vec x \sim \D}[f_i(\x)f_j(\x)]|\leq 1/s$.
\end{definition}

Our SQ lower bounds will use the following lemma (see, e.g., Theorem 2 of \cite{szorenyi2009characterizing}).

\begin{lemma}[]\label{lem:fieldman_lemma}
Let $\cal C$ be a concept class of functions on $\R^d$ and $\D$ be a distribution on $\R^d$. 
Let $s=\textsc{SQ-DIM}({\cal C},\D)$. Any SQ algorithm that outputs a hypothesis 
with correlation at least $1/s^{1/3}$ from an unknown function in $\cal C$
requires at least $s^{1/3}/2-1$ queries to $\textsc{STAT}(1/s^{1/3})$.
\end{lemma}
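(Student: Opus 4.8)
The plan is to establish the contrapositive by an adversary argument: we construct a \emph{single} $\textsc{STAT}(\tau)$ oracle, with $\tau=1/s^{1/3}$, whose answers are simultaneously consistent with the target being any one of a large sub-family of the $\textsc{SQ-DIM}$ witnesses; an algorithm asking few queries then cannot distinguish these targets, and its (fixed) output hypothesis cannot be correlated with all of them.

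Let $f_1,\dots,f_s\in\mathcal C$ be functions with $|\E_{\bx\sim\D}[f_i(\bx)f_j(\bx)]|\le 1/s$ for $i\ne j$, as guaranteed by $\textsc{SQ-DIM}(\mathcal C,\D)=s$ (and $\|f_i\|_\D\le 1$, as is standard). The combinatorial core is the bound that few functions can correlate with a fixed function: for $u$ with $\|u\|_\D\le 1$, the set $I(u)=\{\, i\in[s] : |\E_{\bx\sim\D}[u(\bx)f_i(\bx)]|\ge\tau \,\}$ satisfies $|I(u)|\le 2 s^{2/3}$ (for $s$ large enough that the lemma is non-vacuous). Indeed, writing $w=\sum_{i\in I(u)}\sgn(\E_\D[u f_i])\,f_i$, near-orthogonality gives $\|w\|_\D^2\le |I(u)|+|I(u)|^2/s$, while $\E_\D[u w]\ge|I(u)|\,\tau$; Cauchy--Schwarz then yields $|I(u)|^2\tau^2\le |I(u)|+|I(u)|^2/s$, i.e.\ $|I(u)|\le 1/(\tau^2-1/s)=s/(s^{1/3}-1)\le 2 s^{2/3}$.

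Now fix a (w.l.o.g.\ deterministic) algorithm making $m$ queries to $\textsc{STAT}(\tau)$; randomization only costs constant factors in the final bound. When the algorithm poses the $t$-th query $q_t:\R^d\times[-1,1]\to[-1,1]$, decompose $q_t(\bx,y)=\bar q_t(\bx)+y\,\hat q_t(\bx)$ with $\bar q_t=\tfrac12(q_t(\cdot,1)+q_t(\cdot,-1))$ and $\hat q_t=\tfrac12(q_t(\cdot,1)-q_t(\cdot,-1))$; then against a $\{\pm1\}$-valued target $f_i$ the exact query value is $\E_\D[\bar q_t]+\E_\D[f_i\hat q_t]$, and $\|\hat q_t\|_\D\le1$. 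The adversary answers $v_t=\E_{\bx\sim\D}[\bar q_t(\bx)]$, which is independent of $i$; hence the entire transcript, and therefore the output hypothesis $h$, is one fixed object. This answer is a legal $\textsc{STAT}(\tau)$ response for target $f_i$ whenever $|\E_\D[f_i\hat q_t]|\le\tau$, i.e.\ for every $i\notin I(\hat q_t)$. (For a general $[-1,1]$-valued $\mathcal C$ one uses the analogous reduction; in our applications the hard concepts are Boolean, so this step suffices.)

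Finally, set $E=\bigcup_{t=1}^m I(\hat q_t)$, so $|E|\le 2 m s^{2/3}$. For each $i\notin E$ the adversary's oracle is a valid $\textsc{STAT}(\tau)$ oracle with target $f_i$, so correctness forces the fixed hypothesis $h$ to satisfy $\E_\D[h f_i]\ge\tau$; applying the combinatorial bound with $u=h$ (hypotheses being $[-1,1]$-valued, so $\|h\|_\D\le1$) shows this holds for at most $2 s^{2/3}$ indices $i$. Hence $s\le|E|+2 s^{2/3}\le 2(m+1)s^{2/3}$, giving $m\ge s^{1/3}/2-1$. The one genuinely substantive step is the combinatorial bound on $|I(u)|$ — the choice of exponents ($\tau=s^{-1/3}$ against pairwise correlation $s^{-1}$) is exactly calibrated so that $\tau^2-1/s$ stays a constant factor above $s^{-2/3}$; everything else is bookkeeping.
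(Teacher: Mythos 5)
Your proof is correct, but note the comparison point: the paper does not prove this lemma at all --- it is quoted as Theorem 2 of the cited work of Sz\"or\'enyi, with a remark that it extends from Boolean-valued to bounded-norm real-valued functions. What you have written is a sound self-contained reconstruction of that standard argument: the decomposition $q_t(\bx,y)=\bar q_t(\bx)+y\,\hat q_t(\bx)$ correctly reduces each query on a $\{\pm1\}$-valued target to a correlation $\E_\D[f_i\hat q_t]$ with $\|\hat q_t\|_\D\le 1$; the packing bound $|I(u)|\le 1/(\tau^2-1/s)\le 2s^{2/3}$ via Cauchy--Schwarz against $w=\sum_{i\in I(u)}\sgn(\E_\D[uf_i])f_i$ is the right key estimate and your algebra checks out (the constant $2$ needs $s\ge 8$, which is harmless since the bound is vacuous below $s=64$); and the final count $s\le 2(m+1)s^{2/3}$ gives exactly $m\ge s^{1/3}/2-1$. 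Two caveats you should make explicit rather than wave at: (i) the argument as written needs the hypothesis to have $\|h\|_\D\le 1$ (true in this paper's applications, where $h$ is $\{\pm1\}$-valued or explicitly normalized, and this is the ``bounded norm'' extension the paper invokes); and (ii) the label-decomposition step genuinely uses that the unknown target is Boolean-valued, so your parenthetical about general $[-1,1]$-valued classes is an assertion, not a proof --- again fine for this paper, where the hard families are Boolean up to scaling, but it does not by itself recover the full generality the paper's remark claims. The dismissal of randomized algorithms as ``constant factors'' is the usual hand-wave and would need the standard averaging argument if you wanted the statement verbatim for randomized SQ learners.
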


We note that the above theorem was initially shown for Boolean-valued functions,
but also holds for real-valued functions of bounded norm (see, e.g., \cite{Feldman:09}). \section{SQ Lower Bound for Agnostically Learning Halfspaces} \label{lb:ltfs}

In this section, we prove Theorem~\ref{thm:LTF}.
To do so, we construct a family $\mathcal{F}_k$ of $k$-decision
lists of halfspaces on $\R^d$
satisfying the following properties:
(1) Any SQ algorithm that weakly learns $\mathcal{F}_k$
requires many high accuracy SQ queries.
(2) Each $F \in \mathcal{F}_k$ is non-trivially correlated with a halfspace.
Formally, we establish the following statement.

\begin{proposition}\label{prop:LTF}
Assuming $d$ is at least a sufficiently large power of $k$, there exists a set $\mathcal{F}_k$ of $k$-decision lists
of halfspaces on $\R^d$ such that any SQ algorithm that learns $\mathcal{F}_k$
within 0-1 error $\leq 1/2 - d^{-\Omega(k)}$ with respect to $\normal(\vec 0,\vec I)$ 
needs $d^{\Omega(k)}$ queries to $\textsc{STAT}(d^{-\Omega(k)})$. Moreover, 
for any $F \in \mathcal{F}_k$, there is a halfspace $\sigma$
such that $\E_{\x \sim \normal(\vec 0,\vec I)}[F(\bx) \sigma(\bx)]\geq 1/(2k)$.
\end{proposition}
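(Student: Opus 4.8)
The plan is to build the family $\mathcal{F}_k$ out of one-dimensional "hidden direction" functions, following the template of~\cite{DKS17-sq}. First I would produce a univariate Boolean-valued function $f : \R \to \{\pm 1\}$ with the \emph{moment-matching} property $\E_{z \sim \normal(0,1)}[f(z) z^i] = 0$ for all $0 \le i \le k$, and such that $f$ is $O(k)$-piecewise constant. The existence of such an $f$ is a finite-dimensional feasibility question: the map sending a piecewise-constant sign pattern to its first $k+1$ Gaussian moments is, after a dimension count, surjective near zero, so an odd $\Theta(k)$-piece function suffices; alternatively one can take $f(z) = \sgn(p(z))$ for a suitable degree-$k$ polynomial $p$ and argue that the $(k+1)$-dimensional moment map hits $0$. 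Having fixed $f$, I would define, for each unit vector $\bv \in \R^d$, the function $F_{\bv}(\bx) = f(\langle \bv, \bx\rangle)$; since $f$ is $O(k)$-piecewise constant, $F_{\bv}$ is a $k$-decision list of halfspaces (each threshold of $f$ gives one halfspace query, and we read off $\pm 1$). The family $\mathcal{F}_k$ is the collection of $F_{\bv}$ over an appropriate large set of nearly-orthogonal directions $\bv$.

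Next I would invoke the SQ-hardness machinery. The moment-matching property guarantees that the conditional distributions of $\bx$ given $F_{\bv}(\bx) = 1$ and given $F_{\bv}(\bx) = -1$ each match $k$ moments of $\normal(\vec 0,\vec I)$ in every direction, so they are exactly of the "hidden-direction" type analyzed in~\cite{DKS17-sq}; pairwise correlations $\E_{\bx \sim \normal}[F_{\bv}(\bx) F_{\bv'}(\bx)]$ decay like $|\langle \bv, \bv'\rangle|^{k+1}$, which is $d^{-\Omega(k)}$ for a suitably chosen packing of directions of size $d^{\Omega(k)}$. Concretely, I would combine the Boolean-to-conditional-distribution reduction sketched in the technical overview with the correlation bound, so that $\textsc{SQ-DIM}(\mathcal{F}_k, \normal) \ge d^{\Omega(k)}$, and then apply Lemma~\ref{lem:fieldman_lemma} to conclude that learning $\mathcal{F}_k$ to correlation (equivalently, $0$-$1$ error $1/2 - d^{-\Omega(k)}$) requires $d^{\Omega(k)}$ queries to $\textsc{STAT}(d^{-\Omega(k)})$.

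For the "moreover" clause I need a halfspace $\sigma$ with $\E_{\bx \sim \normal}[F_{\bv}(\bx)\sigma(\bx)] \ge 1/(2k)$. I would take $\sigma(\bx) = \sgn(\langle \bv, \bx\rangle - t)$ for a well-chosen threshold $t$ (in fact one of the breakpoints of $f$, or $t=0$ after a symmetry reduction). Since $f$ is $O(k)$-piecewise constant on $\R$, on at least one of its $O(k)$ constant pieces it contributes a $\Omega(1/k)$ share of the Gaussian mass with a fixed sign; choosing $\sigma$ to agree with $f$ to the right of the largest breakpoint where the sign is "$+$" and using that the tails of the projected Gaussian are where $\sgn(\langle\bv,\bx\rangle - t)$ is constant, a direct computation of $\E[f(z)\sgn(z-t)]$ as a sum of $\pm$ Gaussian-measure increments yields a term of size $\Omega(1/k)$; optimizing $t$ over the breakpoints gives the stated $1/(2k)$.

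The main obstacle I anticipate is the first step: exhibiting the univariate $O(k)$-piecewise-constant Boolean function that \emph{simultaneously} has $k$ vanishing moments and guaranteed $\Omega(1/k)$ correlation with a halfspace. Pure dimension-counting gives existence of a moment-matching piecewise-constant function, but controlling the number of pieces to be $O(k)$ (rather than, say, polynomial in $k$) and extracting a quantitative correlation lower bound from it requires the analytic input flagged in the overview — I would handle this by taking $f = \sgn(p)$ with $p$ a degree-$k$ polynomial whose sign pattern is explicitly designed (e.g., related to a shifted Hermite or Chebyshev-type construction) so that $p$ has at most $k$ real roots, automatically bounding the pieces, and then arguing correlation with $\sgn(\langle\bv,\bx\rangle)$ from the leading behavior of $f$.
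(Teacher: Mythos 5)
Your overall architecture matches the paper's: a univariate moment-matching Boolean function oriented along nearly orthogonal hidden directions, the conditional-distribution reduction to the framework of~\cite{DKS17-sq} via Lemma~\ref{lem:corelation_lem} and Lemma~\ref{lem:near-orth-vec}, the SQ-dimension bound plus Lemma~\ref{lem:fieldman_lemma}, and a breakpoint-threshold halfspace for the correlation clause. However, there is a genuine gap at exactly the step you flag as the "main obstacle," and it is the technical heart of the result: the existence of a $\{\pm 1\}$-valued, $O(k)$-piecewise constant function $f$ with $\E_{z\sim\normal(0,1)}[f(z)z^t]=0$ for all $t<k$ (Proposition~\ref{prop:main_structural}). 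Your two suggested justifications do not establish it. A "dimension count" showing the moment map from sign patterns with $\Theta(k)$ breakpoints into $\R^{k}$ has matching dimensions does not imply that $0$ lies in its image; surjectivity near a point needs, at minimum, a nondegenerate Jacobian at a point already mapping near $0$, and producing such a point is the whole difficulty. Likewise, taking $f=\sgn(p)$ for a degree-$k$ polynomial only controls the number of pieces; it gives no mechanism forcing the $k$ Gaussian moments of $\sgn(p)$ to vanish simultaneously, and no candidate $p$ (Hermite/Chebyshev or otherwise) comes with such a guarantee. The paper instead proves this constructively in three stages: Fact~\ref{fct:function} builds a crude moment-matching function with $(k/\eps)^{O(k)}$ pieces and moments of size $\eps$; Lemma~\ref{lem:main_diff} reduces the number of pieces one at a time by flowing the breakpoints along a direction in which the moment vector has zero directional derivative (the existence of that direction is a Vandermonde nonsingularity argument, and the flow terminates when two breakpoints merge or one escapes to $\pm\infty$); and a compactness argument sends $\eps\to 0$ to get exactly vanishing moments with at most $k+1$ pieces. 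Without some substitute for this continuation-plus-compactness argument, your proposal does not yield the function on which everything else rests.

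A secondary, smaller issue is the correlation clause. Pigeonhole does give an interval of Gaussian mass at least $1/(k+1)$ on which $f$ is constant, but $\E[f(z)\sgn(z-t)]$ at a breakpoint $t$ is an \emph{alternating} sum of the interval masses, so the single large term can in principle be cancelled by the later ones; "a direct computation yields a term of size $\Omega(1/k)$" does not by itself lower-bound the sum. The paper's Lemma~\ref{lem:corel_ltf} closes this with a dichotomy: either some tail alternating sum beyond the heavy interval already has magnitude at least $1/(4k)$ (take that breakpoint), or all such tails are small, in which case thresholding at the heavy interval gives magnitude at least $1/(k+1)-1/(4k)\geq 1/(2k)$, flipping to $-\sigma$ if needed for the sign. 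Your "optimize $t$ over the breakpoints" is the right instinct, but you need this cancellation-handling case analysis to make it a proof.
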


Given the above statement, Theorem~\ref{thm:LTF} follows.

\begin{proof}[Proof of Theorem~\ref{thm:LTF}]
Let $\mathcal{A}$ be an agnostic SQ learner for halfspaces
under Gaussian marginals. We use $\mathcal{A}$ to weakly learn
$\mathcal{F}_k$, for a value of $k$ to be determined.
That is, we feed $\mathcal{A}$ a set of i.i.d. labeled examples
from an arbitrary function $F \in \mathcal{F}_k$.
By definition, algorithm $\mathcal{A}$ computes a hypothesis
$h: \R^d \to \{ \pm 1\}$ such that
$\pr_{\x \sim \normal(\vec 0,\vec I)}[h(\x)\neq F(\x)] \leq \opt+\eps$, for $\eps>0$.
By the second statement of Proposition~\ref{prop:LTF}, it follows that $\opt \leq 1/2-1/(2k)$.
Thus, we have that
\[ \pr_{\x\sim \normal(\vec 0,\vec I)}[h(\x)\neq F(\x)]\leq 1/2-1/(2k) +\eps\;.\]
For $\eps = 1/(4k)$, Proposition~\ref{prop:LTF} gives that $\cal A$ needs 
at least $d^{\Omega(1/\eps)}$ queries to $\textsc{STAT}(d^{-\Omega(1/\eps)})$. 
This completes the proof.
\end{proof}

\subsection{Proof of Proposition~\ref{prop:LTF}} \label{ssec:main-prop-ltfs}

The main idea for our construction of a hard-to-learn family of functions $\mathcal{F}_k$
is the following: We first establish the existence of a one-dimensional
Boolean-valued function $f: \R \to \{\pm1 \}$ whose first $k$ moments match
the first $k$ moments of the standard univariate Gaussian distribution (Proposition~\ref{prop:main_structural}).
Importantly, this function $f$ is $(k+1)$-piecewise constant, i.e.,
there exists a partition of its domain into $k+1$ intervals $I_1, \ldots, I_k$
such that $f$ is constant within each $I_j$. The $k$ points
$z_1, z_2, \ldots z_{k}\in \R$ where the function changes value
are called breakpoints.

Given our univariate construction, we construct our family of $d$-dimensional functions
by using a copy of this one-dimensional function $f$ oriented in a random direction.
More specifically, let $S$ be a set of $2^{d^{\Omega(1)}}$
nearly orthogonal unit vectors on $\R^d$ (Lemma~\ref{lem:near-orth-vec}).
Then we define the family ${\cal F}_k =\{F_{\vec v}(\x)\}_{\vec v \in S}$,
where $F_{\vec v}(\x)\eqdef f(\dotp{\vec v}{\x})$ for
the univariate function $f$ from Proposition~\ref{prop:main_structural}.
Since $f$ is $k$-piecewise constant, each $F \in \mathcal{F}_k$
is a $k$-decision list. From this, it follows that each such $F$ is non-trivially
correlated with a halfspace (Lemma~\ref{lem:corel_ltf}).

Figure~\ref{fig:fv} shows how the function $F_{\vec v}$
labels the examples in a $2$-dimensional projection.

\begin{figure}[ht]
	\centering
	\begin{tikzpicture}[thick,rotate=45]
	\pgfmathsetmacro{\radius}{2}
	\draw[clip] (0,0) circle (\radius);
	\fill[red, fill opacity=0.20] (-1,\radius) rectangle (-\radius,-\radius);
	\fill[blue, fill opacity=0.20] (-1,\radius) rectangle (-0.5,-\radius);
	\fill[red, fill opacity=0.20]  (-0.5,-\radius) rectangle (0.2,\radius);
	\fill[blue, fill opacity=0.20] (0.2,\radius) rectangle (1,-\radius);
	\fill[red, fill opacity=0.20] (1,-\radius) rectangle (1.3,\radius);
	\fill[blue, fill opacity=0.20] (1.3,\radius) rectangle (1.5,-\radius);
	\fill[red, fill opacity=0.20] (1.5,-\radius) rectangle (1.9,\radius);
\fill[blue, fill opacity=0.20] (1.9,\radius) rectangle (2,-\radius);
\draw[->] (-\radius,-\radius) -- (\radius,\radius);
	\draw[->] (\radius,-\radius) -- (-\radius,\radius);
	\draw[->] (0,0) -- (1,0)node[below,black] {$\vec v$};
	\draw[dashed] (-\radius,0) --(\radius,0);
	\end{tikzpicture}
	\caption{The ``red'' region is the set of points where $F_{\vec v}(\bx) = -1$
		and the ``blue'' region where $F_{\vec v}(\bx) = 1$.}
\label{fig:fv}
\end{figure}

We show (Proposition~\ref{prop:LTF}) that any SQ algorithm
that can distinguish between an unknown $F_{\vec v}$ and
a function with uniformly random  $\pm1$ labels requires many high accuracy
queries.

The key structural result that we require is the following:

\begin{proposition}\label{prop:main_structural}
For any $k \geq 1$, there exists an at most $(k+1)$-piecewise constant function $f: \R \to \{\pm 1\}$
such that $\E_{z \sim \normal(0,1)}[f(z)z^t]=0$, for every non-negative integer $t<k$.
\end{proposition}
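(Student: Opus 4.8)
The plan is to produce a $(k+1)$-piecewise constant sign function $f$ whose first $k$ moments against the Gaussian vanish by a dimension-counting / intermediate-value argument on the vector of moments. Concretely, I would parametrize candidate functions by their breakpoints. Fix an ordered tuple of breakpoints $-\infty = z_0 < z_1 < \cdots < z_k < z_{k+1} = +\infty$ and let $f_{\vec z}$ be the function that equals $(-1)^j$ on the interval $(z_j, z_{j+1})$ (so it alternates sign, starting from $-1$ on the leftmost piece; the choice of starting sign will be fixed at the end by symmetry). Define the map $M : \{\vec z : z_1 < \cdots < z_k\} \to \R^k$ by $M(\vec z) = \big(\E_{z\sim\normal(0,1)}[f_{\vec z}(z)\, z^t]\big)_{t=0}^{k-1}$. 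The goal is to show $\vec 0$ is in the image of $M$.

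The cleanest way to hit $\vec 0$ is to reduce to a single equation by symmetry and then induct, or to invoke a Borsuk--Ulam / topological degree argument. I would first try the symmetric approach: look for $f$ that is an \emph{odd} function, i.e. with breakpoints symmetric about the origin, $0 < z_1 < z_2 < \cdots$, together with a breakpoint at $0$ if $k$ is odd. An odd $f$ automatically kills all even moments $\E[f(z)z^{2i}] = 0$, so only the $\lfloor k/2\rfloor$-or-so odd moments remain to be zeroed out, and we have roughly that many free breakpoints — this matches the degree of freedom count. To actually solve the remaining odd-moment equations, I would set up the problem on the half-line $[0,\infty)$ with the measure $z^{2i}\phi(z)\,dz$ and use the fact that a sign change at a single well-chosen point can flip the sign of a given moment functional; more systematically, consider the function $g(z) = f(z)\cdot(\text{polynomial of degree} < k)$ and use orthogonality/Chebyshev-system structure. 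The slickest route is: the moments $\E[f(z)z^t]$ for $t = 0,\dots,k-1$ are $k$ continuous functions of the $k$ breakpoint locations; by a continuity/degree argument (e.g. the Poincaré–Miranda theorem, the multidimensional analogue of the intermediate value theorem) one checks that as the breakpoints sweep the boundary of a suitable box the moment vector winds around the origin, forcing a zero in the interior.

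An alternative and perhaps more robust approach avoids piecewise-constant functions with exactly $k$ pieces and instead starts from \emph{any} measurable $f:\R\to[-1,1]$ with vanishing first $k$ moments (which exists trivially — e.g. take $f$ proportional to the $(k+1)$-st Hermite polynomial, suitably truncated/normalized to stay in $[-1,1]$ while keeping low moments zero, or more simply solve the linear moment conditions in the space of bounded functions), and then \emph{round} it to $\pm 1$ without changing the moments. The rounding step uses a standard fact: among all $\{\pm 1\}$-valued functions, the moment conditions $\E[f(z)z^t]=0$ for $t<k$ cut out a set whose extreme points (in an appropriate weak sense) are functions that switch sign at most $k$ times — this is exactly the ``bang-bang'' principle from control theory, and the extreme point is then automatically $(k+1)$-piecewise constant. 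I expect \textbf{the main obstacle} to be making one of these existence arguments fully rigorous while simultaneously controlling the \emph{number of pieces}: the pure continuity argument gives existence of \emph{some} sign function with matching moments but not obviously a $(k+1)$-piecewise one, whereas the bang-bang / extreme-point argument gives the piece count but needs a careful compactness setup (weak-$*$ limits of $\pm 1$ functions, Lyapunov's convexity theorem, or the Chebyshev-system structure of $\{1,z,\dots,z^{k-1}\}$ against the Gaussian measure) to guarantee the extreme point is attained and is genuinely $\{\pm1\}$-valued. I would lead with the Chebyshev-system/bang-bang argument since $\{z^t\phi(z)\}_{t=0}^{k-1}$ is a Chebyshev system on $\R$, which is precisely the hypothesis under which the minimal number of sign changes needed to satisfy $k$ moment conditions is $k$.
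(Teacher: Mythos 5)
There is a genuine gap in both of the routes you sketch, and it sits exactly where you predicted: getting the exact moment conditions \emph{and} the bound of $k+1$ pieces simultaneously. Your preferred ``bang-bang/extreme-point'' argument does not work as stated. Consider the convex, weak-$*$ compact set $K=\{f:\R\to[-1,1] \text{ measurable} : \E_{z\sim\normal(0,1)}[f(z)z^t]=0,\ t<k\}$. The standard perturbation argument does show that every extreme point of $K$ is $\pm1$-valued a.e., but the converse degenerates: if $|f|=1$ a.e.\ then any decomposition $f=\tfrac12(f_1+f_2)$ with $|f_i|\le 1$ forces $f_1=f_2=f$, so \emph{every} $\pm1$-valued member of $K$ is an extreme point, including ones with infinitely many sign changes. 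Hence extremality in $K$ gives no control whatsoever on the number of pieces. To recover the piece bound you would have to invoke the genuinely harder Chebyshev-system machinery (principal/canonical representations in the Markov moment problem, \`a la Krein--Nudelman), which you name but do not carry out; correctly instantiating it (which canonical representation, which index, why it is $\{\pm1\}$-valued rather than an indicator, why the count is $k+1$) is essentially the whole content of the proposition. Your first route has the same status: the Poincar\'e--Miranda/degree claim that ``the moment vector winds around the origin as the breakpoints sweep a box'' is asserted, not verified, and verifying it is the nontrivial step (the odd-symmetry reduction only halves the system; it does not by itself produce the zero). Also, as a minor point, truncating a Hermite polynomial to lie in $[-1,1]$ destroys exact orthogonality, so even the ``easy'' starting function needs a more careful construction (e.g., reweighting by $1/\phi$ on a bounded interval, as the paper does in its ReLU section with Legendre polynomials).

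For contrast, the paper closes exactly this gap by a three-step argument that avoids both degree theory and moment-problem theory: (i) an explicit, highly oscillatory $\pm1$-valued function whose first $k$ moments are at most $\eps$ (Fact~\ref{fct:function}); (ii) a piece-reduction lemma (Lemma~\ref{lem:main_diff}) that deforms the breakpoints continuously along a direction in the kernel of the Jacobian of the moment map --- such a direction exists because the relevant $k\times k$ system is a Vandermonde matrix times a nonsingular diagonal matrix --- running an ODE until two breakpoints merge or one escapes to $\pm\infty$, thereby removing pieces while keeping all $k$ moments within $\eps$; and (iii) a compactness argument over $\mathbb{\overline R}^{k}$, where the moment map on $(k+1)$-piece functions is continuous with compact image containing points arbitrarily close to $\vec 0$, hence containing $\vec 0$ itself. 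If you want to salvage your proposal, either carry out the Krein--Nudelman canonical-representation argument in full, or adopt a quantitative reduction-plus-compactness scheme of this kind; the extreme-point shortcut by itself cannot deliver the piece count.
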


Proposition~\ref{prop:main_structural} is the most technically involved result of this section.
We give its proof in Section~\ref{ssec:prop-ltfs}.

In the remainder of this subsection, we prove Proposition~\ref{prop:LTF},
assuming Proposition~\ref{prop:main_structural}.

We say that a distribution $A$ has $k$-matching moments with $\normal(0,1)$
if $\E_{z \sim A}[z^t]=\E_{z \sim \normal(0, 1)}[z^t]$, for all $0\leq t<k$.
Proposition~\ref{prop:main_structural} implies the following.

\begin{lemma}\label{lem:matching_moments}
Let $f: \R \to \{\pm 1\}$ be such that $\E_{z \sim \normal(0,1)}[f(z)z^t]=0$,
for every non-negative integer $t<k$. For $z\sim \normal(0,1)$, define $A \eqdef \1 \{f(z)=1\}$
and $B \eqdef \1 \{f(z)=-1\}$. Then the random variables $A$ and $B$ have $k$-matching moments with $z$.
\end{lemma}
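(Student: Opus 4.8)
The statement to prove is Lemma~\ref{lem:matching_moments}: if $f:\R\to\{\pm1\}$ has $\E_{z\sim\normal(0,1)}[f(z)z^t]=0$ for all $0\le t<k$, then the two conditional distributions — namely the law of $z\sim\normal(0,1)$ conditioned on $f(z)=1$, and the law conditioned on $f(z)=-1$ — each have $k$-matching moments with the standard Gaussian. (I read $A$ and $B$ in the statement as shorthand for the events $\{f(z)=1\}$ and $\{f(z)=-1\}$, and ``$A$ and $B$ have $k$-matching moments with $z$'' as the assertion that the distributions of $z$ conditioned on these events match the first $k$ moments of $\normal(0,1)$.)

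\textbf{Key steps.} First I would write $f = \1\{f=1\} - \1\{f=-1\}$ and also $1 = \1\{f=1\} + \1\{f=-1\}$. Applying these to the identity $\E[f(z)z^t]=0$ and $\E[z^t]=\E_{z\sim\normal(0,1)}[z^t]$ (which holds trivially) gives the pair of linear equations
\[
\E\!\left[\1\{f(z)=1\}\, z^t\right] - \E\!\left[\1\{f(z)=-1\}\, z^t\right] = 0,
\qquad
\E\!\left[\1\{f(z)=1\}\, z^t\right] + \E\!\left[\1\{f(z)=-1\}\, z^t\right] = \E_{z\sim\normal(0,1)}[z^t],
\]
valid for every $0\le t<k$, where all expectations on the left are over $z\sim\normal(0,1)$. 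Adding and subtracting, I get $\E[\1\{f(z)=1\}\,z^t] = \E[\1\{f(z)=-1\}\,z^t] = \tfrac12\,\E_{z\sim\normal(0,1)}[z^t]$ for all such $t$. The second step is to note that taking $t=0$ gives $\pr_{z\sim\normal(0,1)}[f(z)=1] = \pr_{z\sim\normal(0,1)}[f(z)=-1] = \tfrac12$, so both events have nonzero probability and conditioning is well-defined. Dividing the displayed moment identity by the corresponding probability $\tfrac12$ then yields, for each sign $s\in\{\pm1\}$ and each $0\le t<k$,
\[
\E\!\left[\,z^t \;\middle|\; f(z)=s\,\right] = \frac{\E\!\left[\1\{f(z)=s\}\,z^t\right]}{\pr[f(z)=s]} = \frac{\tfrac12\,\E_{z\sim\normal(0,1)}[z^t]}{\tfrac12} = \E_{z\sim\normal(0,1)}[z^t],
\]
which is exactly the $k$-matching-moments condition for both conditional distributions.

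\textbf{Main obstacle.} There is essentially no analytic obstacle here — the lemma is a short algebraic consequence of Proposition~\ref{prop:main_structural}'s hypothesis, and the only point requiring a word of care is the $t=0$ case, which I would handle first precisely because it guarantees that both conditionals are legitimate probability distributions (and incidentally records that $f$ is ``balanced'' under the Gaussian). Everything else is the two-line add/subtract manipulation above. I would also remark that this lemma is the bridge to the framework of~\cite{DKS17-sq}: it certifies that the conditional distributions of the $d$-dimensional labeled examples $(\bx, F_{\vec v}(\bx))$ are, after projecting onto $\vec v$, moment-matching one-dimensional distributions of the type to which the SQ-dimension lower bound of~\cite{DKS17-sq} applies.
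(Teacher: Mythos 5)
Your proof is correct and follows essentially the same route as the paper: both use $\E[f(z)]=0$ to get $\pr[f(z)=\pm1]=\tfrac12$ and $\E[f(z)z^t]=0$ to equate $\E[\1\{f(z)=1\}z^t]$ with $\E[\1\{f(z)=-1\}z^t]$, then divide by the conditioning probability to recover the Gaussian moments; your add/subtract presentation via $f=\1\{f=1\}-\1\{f=-1\}$ and $1=\1\{f=1\}+\1\{f=-1\}$ is just a slightly more symmetric packaging of the paper's computation. Your reading of $A$ and $B$ as the conditional distributions is also the intended one.
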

\begin{proof}
We prove the lemma for the random variable $A$. 
The proof for $B$ is similar. 
From the definition of $f$, we have that 
$\E_{z \sim \normal(0,1)}[f(z)]=0$, thus 
\[\E_{z \sim \normal(0,1)}[ \1\{f(z)=1\}]=\E_{z \sim \normal(0,1)}[\1\{f(z)=-1\}] \]
or equivalently 
\begin{equation}\label{eq:value_of_f}
\E_{z \sim \normal(0,1)}[ \1\{f(z)=1\}]=\frac 12\;.
\end{equation}
Similarly, from $\E_{z \sim \normal(0,1)}[f(z)z^t]=0$, we have
\begin{equation}\label{eq:moments}
\E_{z \sim \normal(0,1)}[z^t\1\{f(z)=1\}]=	\E_{z \sim \normal(0,1)}[z^t \1\{f(z)=-1\}]\;.
\end{equation}
Let $ \phi(z| f(z)=1)$ be the probability distribution of $z$ conditional that $f(z)=1$. We have that
\begin{align*}
\E_{z \sim A}[z^t]&=\int_{-\infty}^{\infty} z^t \phi(z| f(z)=1) \d z=
\int_{-\infty}^{\infty} z^t \frac{\phi(z)}{\pr_{z'\sim\normal(0,1)}[f(z')=1]} \1\{f(z)=1\} \d z\\
&= 2 \int_{-\infty}^{\infty} z^t \phi(z) \1\{f(z)=1\} \d z=
\int_{-\infty}^{\infty} z^t \phi(z) \1\{f(z)=1\} \d z+ \int_{-\infty}^{\infty} z^t
\phi(z) \1\{f(z)=-1\} \d z\\&= \int_{-\infty}^{\infty} z^t \phi(z) \d z=	\E_{z \sim \normal(0,1)}[z^t]\;,
\end{align*}
where we used Equations~\eqref{eq:value_of_f}, \eqref{eq:moments}.
\end{proof}
We will require the following technical lemmas from~\cite{DKS17-sq}.
The first lemma says that there exists a large set of unit vectors
that are pairwise nearly orthogonal.

\begin{lemma}[Lemma 3.7 of \cite{DKS17-sq}]\label{lem:near-orth-vec}
For any $0<c<1/2$, there exists a set $S$ of $2^{\Omega(d^c)}$ unit vectors in $\R^d$
such that for each pair of distinct $\vec u,\vec v\in S$, we have
$|\dotp{\vec u}{\vec v}| \leq O(d^{c-1/2})$.
\end{lemma}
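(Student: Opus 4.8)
The plan is to prove this by a standard sphere-packing (equivalently, probabilistic-method) argument, which in fact produces more vectors than claimed. Set $\delta \eqdef d^{c-1/2}$ and build $S$ greedily: start from $S = \emptyset$ and, as long as there exists a unit vector $\vec u \in \Sp^{d-1}$ with $|\dotp{\vec u}{\vec v}| \le \delta$ for every $\vec v \in S$, add one such $\vec u$. The process terminates with a set $S$ that is \emph{maximal} with respect to this property, and maximality says precisely that every unit vector $\vec x \in \Sp^{d-1}$ satisfies $|\dotp{\vec x}{\vec v}| > \delta$ for some $\vec v \in S$; in other words, the symmetric caps $\{\vec x \in \Sp^{d-1} : |\dotp{\vec x}{\vec v}| > \delta\}$, $\vec v \in S$, cover the sphere. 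By construction, every pair of distinct vectors in $S$ has inner product at most $\delta = O(d^{c-1/2})$, so only the size bound remains.

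The one quantitative ingredient is the classical estimate on the measure of a spherical cap: if $\mu$ is the uniform probability measure on $\Sp^{d-1}$, then for any fixed unit vector $\vec v$ and any $\delta \in (0,1)$,
\[
\mu\!\left(\{\vec x \in \Sp^{d-1} : |\dotp{\vec x}{\vec v}| > \delta\}\right) \;\le\; 2\,e^{-(d-1)\delta^2/2}\,.
\]
This follows from the fact that the first coordinate of a uniform point on $\Sp^{d-1}$ has density proportional to $(1-t^2)^{(d-3)/2}$, together with $1-t^2 \le e^{-t^2}$ and the trivial $\Omega(1/\sqrt d)$ lower bound on the normalizing integral; alternatively, one realizes a uniform point as $\vec g/\ltwo{\vec g}$ with $\vec g \sim \normal(\vec 0, \vec I)$, uses $\ltwo{\vec g} \ge \sqrt d / 2$ with probability $1 - e^{-\Omega(d)}$, and applies the standard Gaussian tail bound to $\dotp{\vec g}{\vec v} \sim \normal(0,1)$. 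Combining the covering property with this estimate via a union bound over $S$ gives $1 = \mu(\Sp^{d-1}) \le |S| \cdot 2\,e^{-(d-1)\delta^2/2}$, hence
\[
|S| \;\ge\; \tfrac12\, e^{(d-1)\delta^2/2} \;=\; \tfrac12\, e^{\Omega(d^{2c})}\,,
\]
since $\delta^2 d = d^{2c}$. As $c>0$ forces $d^{2c} \ge d^c$ for all large $d$, this is at least $2^{\Omega(d^c)}$, which proves the lemma.

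There is no genuine obstacle here: the cap-measure bound is the crux, and it is entirely routine, while the greedy/covering step is immediate once it is in hand. In particular the regime $c<1/2$ makes $d^{2c} = o(d)$, so the Gaussian-norm error term $e^{-\Omega(d)}$ is negligible against the main term $e^{-\Omega(d^{2c})}$, and no delicate balancing of the count against the inner-product threshold is required. (One could instead sample $N = 2^{d^c}$ i.i.d.\ uniform unit vectors and union bound the same single-pair cap estimate over the $\binom{N}{2}$ pairs, since $N^2 e^{-\Omega(d^{2c})} \to 0$; the greedy version merely sidesteps even that union bound.)
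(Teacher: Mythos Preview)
Your argument is correct. The paper does not actually prove this lemma; it is quoted verbatim from~\cite{DKS17-sq} (as Lemma~3.7 there) and used as a black box, so there is no ``paper's own proof'' to compare against. Your greedy/covering argument with the standard spherical-cap tail bound is the canonical way to establish such a statement, and every step checks out: maximality of $S$ gives the covering, the cap-measure estimate $\mu(\{|\dotp{\vec x}{\vec v}|>\delta\})\le 2e^{-\Omega(d\delta^2)}$ is standard (the precise constant $(d-1)/2$ versus $(d-3)/2$ in the exponent is immaterial), and the resulting lower bound $|S|\ge e^{\Omega(d^{2c})}\ge 2^{\Omega(d^c)}$ follows since $2c>c>0$. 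Your parenthetical alternative via i.i.d.\ sampling and a union bound over pairs is equally valid and is in fact the more common presentation.
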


If we define the conditional distribution on the event $F_{\vec v}(\x)=1$,
we can see that the directions orthogonal to $\vec v$ follow a standard $(d-1)$-dimensional Gaussian distribution.
Thus, we define the following distribution for this case.

\begin{definition}[High Dimensional Hidden Direction Distribution]
For a univariate distribution $A$ with probability density function $A(z)$
and a unit vector $\vec u\in \R^d$, consider the distribution over $\R^d$ with pdf
$\mathbf{P}_{\vec u}(\x)= A(\dotp{\vec u}{\x})\exp(-\snorm{2}{\x-\dotp{\vec u}{\x} \vec u}^2/2)/(2\pi)^{(d-1)/2}$.
That is, $	\mathbf{P}_{\vec u}$ is the product distribution whose orthogonal projection
onto the direction $\vec u$ is $A$, and onto the subspace perpendicular to $\vec u$
is the standard $(d-1)$-dimensional Gaussian distribution.
\end{definition}

Let $\D_1,\D_2: \R^d \to \R_+$ be probability density functions.
\nnew{The $\chi^2$-divergence of $\D_1,\D_2$ is defined as
$\chi^2(\D_1,\D_2)\eqdef \int_{\R^d} \D_1(\x)^2/\D_2(\x) \d \x-1$.}
We \nnew{also} define the correlation between $\D_1,\D_2$ and a reference distribution $\D$
as $\chi_\D (\D_1,\D_2)\eqdef \int_{\R^d} \D_1(\x)\D_2(\x)/\D(\x) \d \x-1$.

The second lemma states that the distributions $\mathbf{P}_{\vec u}, \mathbf{P}_{\vec v}$
have correlation depending on the angle of the corresponding vectors.

\begin{lemma}[Lemma 3.4 of \cite{DKS17-sq}] \label{lem:corelation_lem}
For any $\vec u, \vec v \in \Sp^{d-1}$, let $A: \R \to \R_+$ be the pdf of a
distribution that agrees with the first $k$ moments of $\normal(0,1)$. Then, we have that
\[ |\chi_{\normal(\vec 0,\vec I)}(\mathbf{P}_{\vec u},\mathbf{P}_{\vec v})|\leq |\dotp{\vec u}{\vec
		v}|^{k+1} \chi^2 (A,\normal(0,1))\;.\]
\end{lemma}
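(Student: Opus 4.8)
The plan is to prove Lemma~\ref{lem:corelation_lem} (``Lemma 3.4 of \cite{DKS17-sq}'') by reducing the $d$-dimensional correlation integral to a one-dimensional computation in the two-dimensional span of $\vec u$ and $\vec v$, and then expanding the one-dimensional distribution $A$ in Hermite polynomials so that the moment-matching hypothesis can be invoked. Writing $\alpha \eqdef \dotp{\vec u}{\vec v}$, the first step is to observe that the ratio $\mathbf{P}_{\vec u}(\x)\mathbf{P}_{\vec v}(\x)/\normal(\vec 0,\vec I)(\x)$ depends on $\x$ only through the two coordinates $s \eqdef \dotp{\vec u}{\x}$ and $t \eqdef \dotp{\vec v}{\x}$ (every Gaussian factor in the directions orthogonal to $\mathrm{span}(\vec u,\vec v)$ cancels exactly: one copy of $\phi$ in the numerator against one in the denominator, since both $\mathbf{P}_{\vec u}$ and $\mathbf{P}_{\vec v}$ are standard Gaussian there). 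Hence
\[
\chi_{\normal(\vec 0,\vec I)}(\mathbf{P}_{\vec u},\mathbf{P}_{\vec v}) + 1
= \E_{z \sim \normal(0,1)}\!\left[ \frac{A(z)\, A_\alpha(z)}{\phi(z)} \right],
\]
where $A_\alpha(z)$ is the marginal density, along the $\vec u$-direction, of the distribution whose $\vec u$-marginal is $\normal(0,1)$, whose $\vec v$-marginal is $A$, and whose $(\vec u,\vec v)$-correlation is $\alpha$ --- equivalently $A_\alpha$ is $A$ ``smeared'' by the Ornstein--Uhlenbeck-type operator with parameter $\alpha$. The cleanest way to do this bookkeeping is to change variables to an orthonormal basis of $\mathrm{span}(\vec u,\vec v)$ and carry out the resulting two-dimensional Gaussian integral, integrating out the coordinate orthogonal to $\vec u$ inside that plane.

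The second step is the Hermite expansion. Let $\{h_i\}_{i\ge 0}$ denote the normalized Hermite polynomials, orthonormal with respect to $\phi$, and write $A(z)/\phi(z) = \sum_{i \ge 0} a_i h_i(z)$ with $a_i = \E_{z\sim\normal(0,1)}[A(z) h_i(z)/\phi(z)]$ interpreted appropriately (this is just the Hermite coefficient of $A$ as a function in $L^2(\phi)$, assuming $A \in L^2(\phi)$, which is what $\chi^2(A,\normal(0,1))<\infty$ means). The moment-matching hypothesis --- $A$ agrees with the first $k$ moments of $\normal(0,1)$ --- is exactly equivalent to $a_i = 0$ for $1 \le i \le k$, while $a_0 = 1$ since $A$ is a probability density. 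The key analytic fact I would invoke about the smearing operator is that Hermite polynomials are its eigenfunctions: the density $A_\alpha$ satisfies $A_\alpha(z)/\phi(z) = \sum_i a_i \alpha^i h_i(z)$, i.e. the $i$-th Hermite coefficient gets multiplied by $\alpha^i$. (This is the standard Mehler/Ornstein--Uhlenbeck identity; it can be derived from the generating function for Hermite polynomials or proved directly by computing $\E[h_i(\alpha z + \sqrt{1-\alpha^2}\, g)]$ for independent $g\sim\normal(0,1)$.) Plugging in and using orthonormality,
\[
\chi_{\normal(\vec 0,\vec I)}(\mathbf{P}_{\vec u},\mathbf{P}_{\vec v}) + 1
= \sum_{i \ge 0} a_i^2 \alpha^i = 1 + \sum_{i \ge k+1} a_i^2 \alpha^i,
\]
where the $i=0$ term gives the $1$ that cancels, and all terms $1\le i\le k$ vanish by moment matching.

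The third step is just estimation: since $|\alpha| \le 1$ we have $|\alpha^i| \le |\alpha|^{k+1}$ for all $i \ge k+1$, so
\[
\left| \chi_{\normal(\vec 0,\vec I)}(\mathbf{P}_{\vec u},\mathbf{P}_{\vec v}) \right|
= \left| \sum_{i\ge k+1} a_i^2 \alpha^i \right|
\le |\alpha|^{k+1} \sum_{i\ge k+1} a_i^2
\le |\alpha|^{k+1} \sum_{i\ge 1} a_i^2
= |\alpha|^{k+1}\, \chi^2(A,\normal(0,1)),
\]
using that $\sum_{i\ge 1} a_i^2 = \| A/\phi \|_{L^2(\phi)}^2 - 1 = \chi^2(A,\normal(0,1))$ by Parseval. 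This gives exactly the claimed bound $|\dotp{\vec u}{\vec v}|^{k+1}\chi^2(A,\normal(0,1))$.

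I expect the main obstacle to be the first step: carefully verifying that the high-dimensional integral collapses to the two-dimensional one and correctly identifying the ``smeared'' density $A_\alpha$ as the $\vec u$-marginal of the joint distribution with the prescribed correlation. One has to be careful that it is $A$ (not $\phi$) that sits in the $\vec v$-direction and $\phi$ in the $\vec u$-direction in one of the two factors and vice versa in the other, and that after integrating out the within-plane orthogonal coordinate one indeed gets a convolution-type operator with the right parameter $\alpha$ (as opposed to, say, $\alpha^2$ or $1-\alpha^2$); the degenerate cases $\alpha = \pm 1$ (where $\vec u = \pm\vec v$) and $\alpha = 0$ should be checked or handled by continuity. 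The second and third steps are then essentially formal once the Mehler identity is in hand; since the lemma is quoted verbatim from \cite{DKS17-sq}, it would also be legitimate to simply cite that reference, but the above is the self-contained argument I would write.
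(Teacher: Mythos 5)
Your argument is correct, and it is essentially the standard proof of this lemma: the paper itself does not prove the statement but imports it verbatim from \cite{DKS17-sq}, whose proof follows the same route you describe --- reduce the $\chi$-correlation integral to the plane spanned by $\vec u$ and $\vec v$ (yielding $\E_{z\sim\normal(0,1)}[A(z)A_\alpha(z)/\phi(z)]$ with $A_\alpha$ the Ornstein--Uhlenbeck smearing of $A$ at parameter $\alpha=\dotp{\vec u}{\vec v}$), expand in Hermite polynomials, use that moment matching kills the coefficients $1\le i\le k$ and that $h_i$ is an eigenfunction with eigenvalue $\alpha^i$, and bound the tail by $|\alpha|^{k+1}\chi^2(A,\normal(0,1))$ via Parseval. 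The only points deserving care are exactly the ones you flag (the two-dimensional change of variables identifying $A_\alpha$, the degenerate case $\vec u=\pm\vec v$, and the trivial case $\chi^2(A,\normal(0,1))=\infty$), and your treatment of them is fine.
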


The second statement of Proposition~\ref{prop:LTF}, establishing non-trivial correlation
with a halfspace is shown in the lemma below.

\begin{lemma}\label{lem:corel_ltf}
For any $F \in \mathcal{F}_k$, there is a halfspace $\sigma$
such that $\E_{\x \sim \normal(\vec 0,\vec I)}[F(\bx) \sigma(\bx)]\geq 1/(2k)$.
\end{lemma}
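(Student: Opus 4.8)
The goal is to show that the $k$-decision-list function $F_{\vec v}(\x) = f(\dotp{\vec v}{\x})$ correlates at level $\ge 1/(2k)$ with some halfspace under the Gaussian measure. Since $F_{\vec v}$ depends on $\x$ only through the one-dimensional projection $t = \dotp{\vec v}{\x}$, which is distributed as $\normal(0,1)$, and any halfspace of the form $\sigma_{a}(\x) = \sgn(\dotp{\vec v}{\x} - a)$ also depends only on $t$, the whole problem reduces to a univariate statement: among the threshold functions $\{\sgn(t-a)\}_{a \in \overline{\R}}$ (together with their negations, i.e. allowing both orientations, or equivalently allowing $a = \pm\infty$ to recover the constants), at least one achieves $\E_{t \sim \normal(0,1)}[f(t)\,\sgn(t-a)] \ge 1/(2k)$ (up to a sign, which we absorb into the halfspace since $-\sgn(t-a)$ is also a halfspace). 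So the plan is: (i) reduce to the univariate question via the projection; (ii) prove the univariate averaging bound.

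\textbf{Univariate core.} Write $f$ in terms of its at most $k$ breakpoints $z_1 < z_2 < \cdots < z_k$, so that $f$ is constant on each of the $k+1$ intervals they determine, alternating sign (we may assume consecutive pieces genuinely differ, else there are fewer breakpoints, which only helps). The natural candidate halfspaces are the thresholds at the breakpoints themselves: for $j \in \{0,1,\ldots,k\}$ consider $\sigma_j(t) = \sgn(t - z_j)$ with $z_0 = -\infty$ and interpret $\sigma_0 \equiv 1$ (the constant function, a degenerate halfspace). I would compute the telescoping identity relating the correlations of consecutive thresholds: $\E[f(t)\sgn(t-z_{j+1})] - \E[f(t)\sgn(t-z_j)] = -2\,\E[f(t)\,\1\{z_j < t \le z_{j+1}\}]$, and since $f$ is constant of value $\epsilon_j \in \{\pm 1\}$ on $(z_j, z_{j+1}]$, this equals $-2\epsilon_j \cdot \pr[z_j < t \le z_{j+1}] = -2\epsilon_j m_j$ where $m_j$ is the Gaussian mass of the $j$-th interval and $\sum_{j=0}^{k} m_j = 1$. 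The key point is that the signs $\epsilon_j$ alternate (by the assumption that pieces differ), so the increments $-2\epsilon_j m_j$ alternate in sign, and the partial sums of an alternating sequence with the full sum of absolute values equal to $2$ must at some index reach absolute value at least... here I need to be a little careful: an alternating partial-sum sequence starting from $0$ achieves max absolute value at least $\max_j |a_j|$ but we want a bound in terms of $1/k$. The cleanest route: note $\E[f(t)\sgn(t-z_0)] = \E[f(t)] = 0$ by the moment condition (the $t^0$ moment). So the partial sums start at $0$ and the increments alternate; since there are $k+1$ increments summing (in absolute value) to $2$, by pigeonhole some increment has $|{-2\epsilon_j m_j}| \ge 2/(k+1)$, i.e. some interval has mass $m_j \ge 1/(k+1)$; then because the partial sums alternate direction, the partial sum just after that large increment has absolute value at least $m_j \ge 1/(k+1) \ge 1/(2k)$ when $k \ge 1$. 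Thus $|\E[f(t)\sigma_j(t)]| \ge 1/(2k)$ for that $j$, and replacing $\sigma_j$ by $-\sigma_j$ if the sign is negative gives the claimed halfspace.

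\textbf{Main obstacle.} The one genuinely delicate step is the alternating-partial-sum argument: I must verify that consecutive partial sums $S_0 = 0, S_1, S_2, \ldots$ with $S_{j+1} - S_j = -2\epsilon_j m_j$ and strictly alternating signs of $\epsilon_j$ actually do attain absolute value $\ge \max_j 2 m_j$ somewhere — this is true because if $|S_j|$ were small and the next increment is large in the ``away from zero'' direction then $|S_{j+1}|$ is large, while if the increment points ``toward zero'' then $|S_j|$ was already at least as large as the increment; making this precise requires tracking that the increments genuinely alternate and handling the first and last pieces (where $z_0 = -\infty$, $z_{k+1} = +\infty$). Everything else — the reduction to one dimension, the telescoping identity, the pigeonhole to find a heavy interval — is routine. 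I would also double check the edge case $k=1$ (two pieces, one breakpoint), where $\E[f] = 0$ forces each piece to have mass exactly $1/2$, so the threshold at $z_1$ gives correlation exactly $1/2 \ge 1/(2k) = 1/2$, consistent.
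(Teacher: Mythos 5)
Your proposal is correct and follows essentially the same route as the paper: restrict to thresholds $\sgn(\dotp{\vec v}{\x}-z_j)$ at the breakpoints, use the zeroth-moment condition $\E_{z\sim\normal(0,1)}[f(z)]=0$, and pigeonhole a heavy interval of Gaussian mass $\geq 1/(k+1)$ (the paper phrases the last step as a contradiction on tail sums $\sum_{j\geq l}(-1)^{j-l}a_j$, which is the same telescoping data). The step you flag as delicate is in fact immediate and needs no alternation: since $|S_{j+1}-S_j|=2m_j$, the triangle inequality gives $\max(|S_j|,|S_{j+1}|)\geq m_j$ (note it may be the partial sum \emph{before} the heavy increment, not necessarily the one after, that is large), and flipping the sign of the threshold if needed finishes the argument.
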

\begin{proof}
We start by noting that each $F \in \mathcal{F}_k$ is of the form
$F_{\vec v}(\vec x)=f(\dotp{\vec v}{\vec x})$, where $f$ is the
function from Proposition~\ref{prop:main_structural} and 
$\vec v\in S$. We will take $\sigma$ to be $\sigma(\x)= \sign(\dotp{\vec v}{\vec x} + \beta)$. 
Let $z_1, \ldots, z_k$ be the breakpoints of $f(z)$. We will show that if we set the value of $\beta$
to a breakpoint, then the result follows. 

Let $a_{i+1}=\int_{z_i}^{z_{i+1}}  \phi(z)\d z$ for $0<i<k+1$, 
$a_{1}=\int_{-\infty}^{z_1}  \phi(z) \d z$ and 
$a_{k+1}=\int_{z_k}^{\infty}  \phi(z)\d z$.  Let $\beta=z_l$, for a breakpoint $z_l$, and $b=\sign(f((z_l + z_{l+1})/2))$. 
Then we have that
\[ 
\E_{\x \sim \normal(\vec 0,\vec I)}[F_{\vec v}(\bx) \sigma(\bx)] = 
2\int_{z_l}^{\infty} f(z) \phi(z)\d z= 2b \sum_{j=l}^{k+1} (-1)^{j-l} a_j\;,
\]
where the first equality holds because 
\[ 
\E_{\x \sim \normal(\vec 0,\vec I)}[F_{\vec v}(\bx)\1\{\x\in A\}] = 
- \E_{\x \sim \normal(\vec 0,\vec I)}[F_{\vec v}(\bx)\1\{\x\in A^c\}]  \;, 
\] 
for any $A\subseteq\R^d$.  
From the fact that $\sum_{i=1}^{k+1} a_i=1$, it follows that there exists an index $i$ such
that $a_i\geq 1/(k+1)$. Assume, for the sake of contradiction, that for all $l>i$ we have that
$|\sum_{j=l}^{k+1}(-1)^{l-j}a_j| \leq (1/4k)$, since otherwise there exists a breakpoint that satisfies the equation. 
Then, for $b=\sign(f((z_i +z_{i+1})/2))$,
we have that either $2 b (\sum_{j=i+1}^{k+1}(-1)^{j-i}a_j + a_i)\geq (1/2k)$ or
$2 b(\sum_{j=i+1}^{k+1}(-1)^{j-i}a_j + a_i)\leq -(1/2k)$. In the former case, we are done.
In the latter case, the halfspace $-\sigma(\x)$ satisfies the desired correlation property. 
\end{proof}

We are now ready to prove (the first statement of) Proposition~\ref{prop:LTF}.

\begin{proof}[Proof of Proposition~\ref{prop:LTF}]
Let $S$ be the set of nearly orthogonal vectors from Lemma~\ref{lem:near-orth-vec}
and $Y$ be a uniform $\pm 1$ random variable. 
For each $\vec v\in S$, let $F_{\vec v}(\vec x)=f(\dotp{\vec v}{\vec x})$, where $f$ is the
function from Proposition~\ref{prop:main_structural}. Let $\D_D$ be the set that
contains the distributions $(X,F_{\vec v}(X))$ for any $\vec v\in S$ and $X\sim \normal(\vec 0,\vec I)$. 
We will prove that for any $\vec u,\vec v\in S$, $\vec u\neq \vec v$, we have
\begin{equation}\label{eq:basic_corelation}
\chi_{(X,Y)}\left( (X,F_{\vec u}(X)) ,(X,F_{\vec v}(X)) \right) \leq 2\cdot|\dotp{\vec v}{\vec u}| ^{k+1}\;.
\end{equation}
To prove Equation~\eqref{eq:basic_corelation}, for a unit vector $\vec v$,
denote by $A_{\vec v}$ the conditional distribution of the event $F_{\vec v}=1$,
and by $B_{\vec v}$ the conditional distribution of the event $F_{\vec v}=-1$.
Let $\D_{\vec v}$ be the probability density function of $(X,F_{\vec v}(X))$.
We then have
\begin{align}
\chi_{(X,Y)}\left( (X,F_{\vec u}) ,(X,F_{\vec v}) \right)&= 2\int_{\R^d}
\frac{\D_{\vec u} (\x,1) \D_{\vec v} (\x,1)}{\phi(\x)} \d \x + 2\int_{\R^d}
\frac{\D_{\vec u} (\x,-1) \D_{\vec v} (\x,-1)}{\phi(\x)} \d \x -1 \nonumber\\
&= \frac 12 \left(	\chi_{X}\left( A_{\vec u},A_{\vec v} \right) +
\chi_{X}\left( B_{\vec u},B_{\vec v} \right) \right)
\label{eq:main_proof_eq1}\;,
\end{align}
where we used that $Y$ gets each label with probability $1/2$ and that
$\pr_{\x\sim \normal(\vec 0,\vec I)}[F_{\vec v}(\vec x)=\pm1]=1/2$.
From Lemma~\ref{lem:corelation_lem} and Lemma~\ref{lem:matching_moments}, we
have
\begin{align}
\left(	\chi_{X}\left( A_{\vec u},A_{\vec v} \right) +	\chi_{X}\left(B_{\vec u},B_{\vec v} \right) \right)&\leq  \dotp{\vec u}{\vec v}^{k+1}
\left(\chi^2(A,\normal(0,1)) + \chi^2(B,\normal(0,1))\right)\label{eq:main_proof_eq2}\;.
\end{align}
We also have
\begin{align}
\chi^2(A,\normal(0,1)) &= \int_{-\infty}^{\infty} A(z)^2/\phi(z)\d z =
\int_{-\infty}^{\infty} \frac{\phi(z)^2}{\phi(z) \pr_{z'\sim \normal(0,1)}[f(z')=1]^2} \1\{f(z)=1\}\d z\nonumber\\
&=4 \int_{-\infty}^{\infty} \phi(z) \1\{f(z)=1\}\d z =2 \label{eq:main_proof_eq3}\;,
\end{align}
where we used the conditional expectation and Equation~\eqref{eq:value_of_f}. Putting
Equations~\eqref{eq:main_proof_eq1}, \eqref{eq:main_proof_eq2} and
\eqref{eq:main_proof_eq3} together, we get Equation~\eqref{eq:basic_corelation}.
Using Lemma~\ref{lem:near-orth-vec}, we have that $|\dotp{\vec v}{\vec u}|\leq d^{-(1/2-c)}$, thus
\begin{equation*}\label{eq:correlation_gamma}
\chi_{(X,Y)}\left( (X,F_{\vec u}(X)) ,(X,F_{\vec v}(X)) \right) \leq
\Omega\left( d^{-(k+1)(1/2-c)}\right)\;.
\end{equation*}
To finish our argument, for $\vec u\neq \vec v$, we have that
\[
\chi_{(X,Y)}\left( (X,F_{\vec u}(X)) ,(X,F_{\vec v}(X)) \right)= \E_{X \sim \normal(\vec 0,\vec I)}[F_{\vec u}(X) F_{\vec v}(X)] 
\leq \Omega(d^{-(k+1)(1/2-c)})\;.\]
Thus, we have that $\textsc{SQ-DIM}({\cal F}_k,\normal(\vec 0,\vec I))=\min(d^{\Omega(k)},2^{d^c})=d^{\Omega(k)}$,
where the last inequality uses the relation between $k$ and $d$.
By Lemma~\ref{lem:fieldman_lemma}, any SQ algorithm that finds a function $h$ such
that $\pr_{\x\sim \normal(\vec 0, \vec I)}[F_{\vec v}(\x)\neq h(\x)] \leq 1/2 -d^{-\Omega(k)}$
needs at least $d^{\Omega(k)}$ queries to $\textsc{STAT}(d^{-\Omega(k)})$.
This completes the proof.
\end{proof}

\subsection{Proof of Proposition~\ref{prop:main_structural}} \label{ssec:prop-ltfs}

The key lemma for the proof is the following.

\begin{lemma}\label{lem:main_diff}
Let $m$ and $k$ be positive integers such that $m>k+1$ and $\eps>0$.
If there exists an $m$-piecewise constant $f:\R \mapsto \{\pm 1\}$
such that $|\E_{z \sim \normal(0,1)}[f(z)z^t]|<\eps$ for all non-negative integers $t<k$,
then there exists an at most $(m-1)$-piecewise constant $g :\R \mapsto \{\pm 1\}$
such that $|\E_{z \sim \normal(0,1)}[g(z)z^t]|<\eps$ for all non-negative integers $t<k$.
\end{lemma}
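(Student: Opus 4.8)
The plan is to regard an $m$-piecewise constant $\pm1$-valued function as parametrized by its $m-1$ breakpoints, and to slide those breakpoints continuously while holding all $k$ target moments exactly fixed, until two adjacent breakpoints collide or one escapes to $\pm\infty$; at such a limiting configuration at least one piece disappears, so we land on an (at most) $(m-1)$-piecewise constant function, and since the moments are continuous in the breakpoints and were held constant along the way, they remain what they were. The existence of such a deformation is a dimension count: there are $m-1>k$ breakpoint parameters against only $k$ moment constraints.

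First I would reduce to the case that $f$ has exactly $m$ pieces (otherwise $f$ already serves as $g$). Then, because consecutive pieces alternate in sign, $f$ is described by a tuple $\vec z=(z_1<\dots<z_{m-1})$ in the open region $\Delta=\{z_1<\dots<z_{m-1}\}\subset\R^{m-1}$ together with the sign $c_1$ of its leftmost piece; the value on the $j$-th piece $(z_{j-1},z_j)$ (with $z_0=-\infty$, $z_m=+\infty$) is $c_j=c_1(-1)^{j-1}$. Define $M:\Delta\to\R^k$ by $M(\vec z)_t=\E_{z\sim\normal(0,1)}[f_{\vec z}(z)z^t]=\sum_{j=1}^m c_j\int_{z_{j-1}}^{z_j}z^t\phi(z)\,\d z$ for $0\le t<k$; by hypothesis $\|M(\vec z^{(0)})\|_\infty<\eps$, where $\vec z^{(0)}$ is the breakpoint tuple of $f$. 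The one elementary computation needed is $\partial M_t/\partial z_i=(c_i-c_{i+1})\phi(z_i)z_i^t=\pm2\phi(z_i)z_i^t$, which displays the Jacobian of $M$ as a diagonal rescaling of a Vandermonde matrix in the distinct nodes $z_1,\dots,z_{m-1}$, hence of full row rank $k$ at every point of $\Delta$.

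The crux — and the step I expect to be the main obstacle — is to show that the level set $\mathcal Z:=M^{-1}(M(\vec z^{(0)}))\cap\Delta$ is \emph{not} compact, so that tracing it forces us to a degenerate configuration rather than leaving us trapped in the interior. I would argue by contradiction: if $\mathcal Z$ were compact, the linear functional $\sum_i z_i$ would attain its maximum over $\mathcal Z$ at some $\vec z^\star$, and since the constraint Jacobian $J(\vec z^\star)$ has full rank the Lagrange multiplier conditions yield a polynomial $p$ of degree $<k$ with $p(z_i^\star)\,\phi(z_i^\star)=1/(c_i-c_{i+1})$ for every $i$; as $c_i-c_{i+1}=2c_1(-1)^{i-1}$ alternates in sign and the $z_i^\star$ are genuinely distinct (because $\vec z^\star\in\Delta$), the continuous function $z\mapsto p(z)\phi(z)$ changes sign on each of the $m-2$ gaps $(z_i^\star,z_{i+1}^\star)$, so $p$ has at least $m-2$ real roots — contradicting $\deg p<k$ and $m>k+1$.

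Finally I would extend $M$ continuously to the compactification $\bar\Delta=\{-\infty\le z_1\le\dots\le z_{m-1}\le+\infty\}$ — using that $u\mapsto\int_{-\infty}^u z^t\phi(z)\,\d z$ is continuous on $[-\infty,+\infty]$ — pick a limit point $\vec z^\infty$ of $\mathcal Z$ lying in $\bar\Delta\setminus\Delta$ (such a point exists precisely because $\mathcal Z$ is closed in $\Delta$ but not compact), and take $g:=f_{\vec z^\infty}$, the $\pm1$-valued function read off from the nonempty intervals. Each degeneracy — a coincidence $z_i=z_{i+1}$ or an escape $z_i=\pm\infty$ — kills at least one of the $m$ pieces, so $g$ is at most $(m-1)$-piecewise constant; and by continuity of $M$ along $\overline{\mathcal Z}$ we get $\E[g(z)z^t]=M(\vec z^\infty)_t=M(\vec z^{(0)})_t$ for all $t<k$, whence $|\E[g(z)z^t]|<\eps$, as required. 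The one bookkeeping point to watch is that when two breakpoints merge the two pieces they now bound are equal (by the alternation) and so merge too, but this only decreases the piece count further.
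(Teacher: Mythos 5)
Your proof is correct, and it reaches the conclusion by a genuinely different mechanism than the paper's, even though the two arguments share the same setup: parametrize the function by its breakpoints, compute $\partial M_t/\partial z_i=(c_i-c_{i+1})z_i^t\phi(z_i)$, and use the Vandermonde structure to see that the moment map has full-rank Jacobian ($k$ constraints against $m-1>k$ free breakpoints). The paper then argues \emph{dynamically}: at each $\vec z$ it solves a Vandermonde system to produce a kernel direction $\vec u(\vec z)$ whose last coordinate is $1$, flows the breakpoints along the ODE $\vec v'(T)=\vec u(\vec v(T))$, notes that $z_{m-1}$ increases at unit speed so the flow cannot oscillate, and runs it until two breakpoints collide or one escapes to $\pm\infty$; the moments are constant along the flow, so the degenerate limit gives the desired $g$. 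You replace this with a \emph{static} argument: the level set $\mathcal Z$ of the moment map through the initial breakpoint vector cannot be compact, since at a maximizer of $\sum_i z_i$ the Lagrange conditions would force a polynomial $p$ of degree $<k$ to satisfy $p(z_i)\phi(z_i)=1/(c_i-c_{i+1})$, hence to alternate in sign at the $m-1\geq k+1$ distinct breakpoints and to have at least $m-2\geq k$ roots, a contradiction; a limit point of $\mathcal Z$ in the compactification $\bar\Delta$ then supplies the degenerate configuration, and continuity of the extended moment map transfers the bound $<\eps$. Your route buys rigor where the paper is informal --- it sidesteps the well-posedness questions in the ODE argument (a continuous selection of $\vec u(\vec z)$, existence and maximal extension of the solution, the ``well founded'' claim) --- at the cost of the Lagrange-multiplier/root-counting insight; the paper's flow is more hands-on and is what gets adapted later to the ReLU setting (Lemma~\ref{lem:relu_decrease}), where an additional functional must be held fixed and the breakpoints are moved in a controlled order, though your variational argument could in principle be adapted there as well by adding the extra constraint. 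Your bookkeeping at the degenerate limit (merged pieces having equal sign, escapes to $\pm\infty$ deleting an extreme piece) matches the paper's case analysis and is handled correctly.
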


\begin{proof}
Let $\{b_1,b_2,\ldots,b_{m-1}\}$ be the breakpoints of $f$.
	Then let $F(z_1, z_2, \ldots, z_{m-1},z):\mathbb{\overline R}^m \mapsto \R$ be an $m$-piecewise constant function
	with breakpoints on $z_1, \ldots, z_{m-1}$, where $z_1<z_2< \ldots <z_{m-1}$
	and $F(b_1, b_2, \ldots, b_{m-1},z)=f(z)$.  For simplicity, let $\vec z=(z_1, \ldots, z_{m-1})$
	and  define $M_i(\vec z)= \E_{z \sim \normal(0,1)}[F(\vec z, z)z^i]$ and let
	$\vec M(\vec z)=[M_0(\vec z), M_1(\vec z), \ldots M_{k-1}(\vec z)]^T$. It is
	clear from the definition that
	$M_i(\vec z)=\sum_{n=0}^{m-1}\int_{z_n}^{z_{n+1}} F(\vec z, z) z^i \phi(z) \d z =
	\sum_{n=0}^{m-1}a_n\int_{z_n}^{z_{n+1}} z^i \phi(z) \d z$,
	where $z_0= -\infty$ and $z_m=\infty$ and $a_n$ is the sign of $F(\vec z,z)$ in the interval $(z_n,z_{n+1})$.
	Note that $a_n=-a_{n+1}$ for every $0\leq n<m$.
	By taking the derivative of $M_i$ in $z_j$, for $0<j<m$, we get that
	\[\frac{\partial}{\partial z_j} M_i(\vec z)= 2a_{j-1} z_j^i \phi(z_j) \quad \text{and}\quad
	\frac{\partial}{\partial z_j} \vec M(\vec z)= 2a_{j-1} \phi(z_j) [1, z_j^1, \ldots ,z _j^{k-1}]^T\;.\]
	We now argue that for any $\vec z$ with distinct coordinates that there exists a vector $\vec u\in \R^{m-1}$ such that
	$\vec u=(\vec u_1,\ldots,\vec u_k,0,0,\ldots,0,1)$ and the directional derivative of $\vec M$ in the $\vec u$ direction
	is zero. To prove this, we construct a system of linear equations such that
	$\nabla_{\vec u} M_i(\vec z)=0$, for all $0\leq i<k$. Indeed, we have
$\sum_{j=1}^{k} \frac{\partial}{\partial z_j}  M_i(\vec z) \vec u_j
	= - \frac{\partial}{\partial z_{m-1}}  M_i(\vec z) $ or $\sum_{j=1}^{k} a_{j-1} z_j^i \phi(z_j)\vec u_j=- a_{m-2} z_{m-1}^i \phi(z_{m-1})$,
	 which is linear in the variables $\vec u_j$. Let $\hat{\vec u}$ be the vector with the first $k$ variables 
	 and let $\vec w$ be the vector of the right hand side of the system, i.e., $\vec w_i=- a_{m-2} z_{m-1}^i \phi(z_{m-1})$. Then
	 this system can be written in matrix form as $\vec V \vec D\hat{ \vec u}=\vec w$, where $\vec V$ is the Vandermonde matrix,
	 i.e., the matrix that is $\vec V_{i,j}=\alpha_i^{j-1}$, for some values $\alpha_i$ and $\vec D$ is a diagonal matrix.
	 In our case, $\vec V_{i,j}=z_i^{j-1}$ and $\vec D_{j,j}= 2 a_{j-1}\phi(z_j)$.
	 It is known that the Vandermonde matrix has full rank iff for all $i\neq j$ we have $\alpha_i\neq \alpha_j$,
	 which holds in our setting. Thus, the matrix $\vec V \vec D$ is nonsingular and there exists a solution to the equation.
Thus, there exists a vector $\vec u$ with our desired properties and, moreover,
	any vector in this direction is a solution of this system of linear equations.
	Note that the vector $\vec u$ depends on the value of $\vec z$,
	thus we consider $\vec u(\vec z)$ be the (continuous) function that returns a vector $\vec u$ given $\vec z$.
	
We define a differential equation for the function $\vec v:\mathbb{\overline R}\mapsto\mathbb{\overline R}^{m-1}$, as follows: $\vec v(0)= \vec b$, where $\vec b=(b_1, \ldots, b_{m-1})$, and
$ \vec v'(T)=\vec u(\vec v(T))$ for all $T \in \mathbb{\overline R}$.
If $\vec v$ is a solution to this differential equation, then we have:
\[\frac{\d}{\d T} \vec M(\vec v(T))=\frac{\d}{\d \vec v(T)} \vec M(\vec v(T)) \frac{\d}{\d T} \vec v(T)
=\frac{\d}{\d \vec v(T)} \vec M(\vec v(T)) \vec u(\vec v(T)) =\vec 0\;,
\]
where we used the chain rule and that the directional derivative in $\vec u(\vec v(T))$ direction is zero.
This means that the function $\vec M(\vec v(t))$ is constant, and for all $0\leq j<k$, we have $|M_j|< \eps$, because we have that $|\E_{z \sim \normal(0,1)}[F(z_1,\ldots, z_{m-1},z)z^t]|<\eps$. Furthermore, since $\vec u(\vec v(T))$ is continuous in $\vec v(T)$, this differential equation will be well founded and have a solution up until the point where either two of the $z_i$ approach each other or one of the $z_i$ approaches plus or minus infinity (the solution cannot oscillate, since $\vec v_{m-1}'(T)=1$ for all $T$).
	
Running the differential equation until we reach such a limit, we find a limiting value $\vec v^\ast$ of $\vec v(T)$ so that either:
\begin{enumerate}[leftmargin=*]
\item There is an $i$ such that $\vec v_i^\ast=\vec v_{i+1}^\ast$, which
gives us a function that is at most $(m-2)$-piecewise constant, i.e., taking $F(\vec v^\ast,z)$.
\item Either $\vec v_{m-1}^\ast = \infty$ or $\vec v_1^\ast = -\infty$, which gives us an at most
$(m-1)$-piecewise constant function, i.e., taking $F(\vec v^\ast,z)$.
Since when the $\vec v_{m-1}^\ast= \infty$, the last breakpoint becomes $\infty$, we have one less breakpoint, and if $\vec v_1^\ast =-\infty$ we lose the first breakpoint.
\end{enumerate}
Thus, in either case we have a function with at most $m-1$ breakpoints and the same moments.
This completes the proof.
\end{proof}

We also require the following simple fact, 
establishing the existence of a $k'$-piecewise
constant Boolean-valued function (for some finite $k'$), satisfying the desired moment
conditions.

\begin{fact}\label{fct:function}
For any $\eps>0$, there exists a $(k/\eps)^{O(k)}$-piecewise constant function
$f:\R\mapsto\{\pm 1\}$ such that $|\E_{z \sim \normal(0,1)}[f(z)z^t]|\leq \eps$, for every integer
$0\leq t<k$.
\end{fact}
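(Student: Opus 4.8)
The plan is to prove Fact~\ref{fct:function} by an explicit discretization argument: approximate the ``ideal'' sign-matching function by one that is piecewise constant on a fine grid, and control the resulting moment error. Concretely, I would first recall that the univariate Hermite polynomials $h_0, \ldots, h_{k-1}$ span the same space as $\{1, z, \ldots, z^{k-1}\}$, so it suffices to make $|\E_{z\sim\normal(0,1)}[f(z)h_t(z)]|$ small for $t<k$. The key observation is a dimension count: the map sending a piecewise-constant Boolean function with breakpoints at a fixed finite grid $G$ to its vector of first $k$ Hermite moments is a linear-algebraic object, and if $|G|$ is large enough relative to $k$ one can drive all these moments simultaneously close to $0$.

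The cleanest route I would take is the following. Partition the ``effective'' interval $[-R, R]$, with $R = \Theta(\sqrt{k \log(k/\eps)})$ chosen so that the Gaussian tail mass outside $[-R,R]$ contributes at most $\eps/2$ to every moment of degree $<k$ (using $\E[|z|^{k-1}\1\{|z|>R\}] \le \eps$ for this choice of $R$), into $N$ equal subintervals $J_1, \ldots, J_N$. On each $J_i$ I look for a single breakpoint $c_i \in J_i$ and a sign pattern so that the contribution of $J_i$ to each of the $k$ moments $\E[f(z) z^t \1\{z\in J_i\}]$ vanishes. Inside a single short interval $J_i$ of length $2R/N$, restricting to functions that are $+1$ then $-1$ (or $-1$ then $+1$) with a single breakpoint, the moment contributions are smooth functions of $c_i$; a standard argument (an intermediate-value / averaging argument on the $k$ moment functionals, or simply taking $J_i$ small enough that each moment contribution is $O((R/N)^{t+1})$ in absolute value regardless of $c_i$) shows that the total error over all $N$ intervals is at most $N \cdot O((R/N)) = O(R)$ unless we are more careful — so in fact the right move is: on each $J_i$ place the breakpoint at the \emph{center} and observe that the $+1/-1$ cancellation kills the zeroth-order term, leaving an error of order $(\mathrm{length}\ \mathrm{of}\ J_i)^2 \cdot \max_{J_i}|z^t \phi(z)|$ per interval, summing to $O(R^2/N)$ total. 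Choosing $N = \Theta(R^2/\eps) = (k/\eps)^{O(1)}$ then makes every moment at most $\eps$ in absolute value, and the total number of breakpoints is $N = (k/\eps)^{O(1)}$.

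Wait — that count gives $(k/\eps)^{O(1)}$, not $(k/\eps)^{O(k)}$, so the bound claimed in the Fact is comfortably met (and the slack is fine, since Lemma~\ref{lem:main_diff} will afterwards be iterated to reduce the piece count down to $k+1$ anyway). Alternatively, and perhaps more robustly, I would present the argument via a compactness/linear-algebra route: consider the $2^N$ sign patterns on the grid and the associated moment vectors in $\R^k$; show by a volume or pigeonhole argument over a grid of breakpoint positions that two of them differ by at most $\eps$ in each coordinate, then XOR-type manipulation produces a function with the differences as moments — but the direct centered-breakpoint computation above is more transparent, so that is what I would write.

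The main obstacle I anticipate is bookkeeping the per-interval error bound cleanly: one needs the cancellation from the centered breakpoint to genuinely gain a factor of the interval length (so that the sum over $N$ intervals telescopes to something $\to 0$), and this requires writing $\int_{J_i} f(z) z^t \phi(z)\,\d z$ with $f$ equal to $+1$ on the left half and $-1$ on the right half of $J_i$ and expanding $z^t\phi(z)$ to first order around the center of $J_i$, so that the constant term integrates to $0$ by antisymmetry and the linear term contributes $O(|J_i|^2 \sup_{J_i}|(z^t\phi(z))'|)$. Since $|(z^t\phi(z))'| \le (t + |z|)|z^{t-1}|\phi(z)$ is uniformly bounded on $[-R,R]$ by $(k+R)R^{k-1}\phi(0) \le (k/\eps)^{O(k)}$ — here is where a large constant creeps in, but it only affects the implied constant in $N$ and hence lands us safely inside the $(k/\eps)^{O(k)}$ budget — the sum is $N \cdot |J_i|^2 \cdot (k/\eps)^{O(k)} = (R^2/N)\cdot(k/\eps)^{O(k)}$, which is $\le \eps$ for $N = (k/\eps)^{O(k)}$. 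So the Fact follows, with room to spare.
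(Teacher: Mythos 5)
Your proposal is correct and follows essentially the same route as the paper: an explicit alternating $\pm 1$ construction on a fine uniform grid with constant tails, where cancellation between adjacent half-intervals drives every moment of degree $t<k$ below $\eps$ (your bookkeeping uses a first-order Taylor expansion of $z^t\phi(z)$ about interval midpoints and truncation at radius $\Theta(\sqrt{k\log(k/\eps)})$, whereas the paper bounds ratios of consecutive integrals and truncates at $1/\eps^k$, exploiting symmetry to kill even moments). Only your final paragraph's accounting is the valid one — the mid-proposal claim that $N=(k/\eps)^{O(1)}$ pieces suffice ignores the $k^{O(k)}$ size of $\sup_{[-R,R]}\left|(z^t\phi(z))'\right|$, which you yourself then correct, landing at $(k/\eps)^{O(k)}$ pieces, exactly the bound the Fact asserts.
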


\begin{proof}
We define $f$ to take alternative values $\pm 1$ in intervals of length $s$.
Let us denote $I_i=  (is,(i+1)s)$ for $-1/(s \eps^k) \leq i \leq 1/(s \eps^k)$ for an integer $i$. 
If $f(z)=1$ for $z\in I_i$, then we have $f(z)=-1$ for $z\in I_{i+1}$. 
Moreover, we will have that $f(z)=1$ for $z \leq -1/\eps^k$ and  $f(z)=-1$ for $z>1/\eps^k$. 
We assume that the number of constant pieces is even for simplicity.
To prove that for all $0\leq t<k$,  $\E_{z \sim \normal(0, 1)}[f(z)z^t]<4 \eps$, observe
that for all even moments the expectation is equal to zero. 
So it suffices to prove the desired statement for odd moments.
Note that $\E_{z \sim \normal(0, 1)}[z^t f(z) \1\{z\geq 0\}]= \E_{z \sim \normal(0, 1)}[|z^t| f(z) \1\{z<0\}]$
for odd moments. Thus, we will prove that $\E_{z \sim \normal(0, 1)}[z^t f(z) \1\{z\geq 0\}] \leq  2 \eps$.
We have  that 
\[ \int_{1/\eps^k}^{\infty} z^t \phi(z)\d z\leq \eps^{k/t}\;, \]
where we used the inequality 
$\pr_{z\sim \normal(0,1)}[|z|^t\geq y]\leq \frac{1}{\sqrt{2\pi}y^{1/t}} e^{-y^{2/t}}\leq 1/y^{1/t}$.	
Moreover, we bound from above the absolute ratio between two subsequent regions, 
i.e., $\left|\frac{\E_{z \sim \normal(0, 1)}[z^t f(z) \1\{z\in I_i\}]}{\E_{z \sim \normal(0, 1)}[z^t f(z) \1\{z\in I_{i+1}\}]}\right|$. 
For $i\geq 0$, we have that 
\begin{align}\label{eq:bound_ratio}
\frac{\int_{i s}^{(i+1) s} z^t \phi(z)\d z}{\int_{(i +1)s}^{(i+2) s} z^t \phi(z)\d
	z}\leq \frac{s ((i+1)s)^t \phi(i s)}{s ((i+1)s)^t \phi((i+2) s)}= e^{2i s^2  +2
	s^2}\leq 1+ 3i s^2 + 9i^2s^4 \;,
\end{align}
where in the first inequality we used the maximum value and the minimum of the
integral, and in the second one we used that  $e^x\leq 1+x+x^2$ for $x\leq 1$, which holds for $s<\eps^k$.
Thus, for two subsequent intervals we have
\[\int_{i s}^{(i+1) s} z^t \phi(z)\d z - \int_{(i +1)s}^{(i+2) s} z^t \phi(z)\d z\leq
4 i s^2\int_{(i +1)s}^{(i+2)s} z^t \phi(z)\d z \leq 4 k s^2\int_{(i +1)s}^{(i+2)
	s} z^t \phi(z)\d z \;.\]
On the other direction, from Equation~\eqref{eq:bound_ratio} we have that 
\[-\int_{i s}^{(i+1) s} z^t \phi(z)\d z + \int_{(i +1)s}^{(i+2) s} z^t \phi(z)\d
z\geq  -4 i s^2\int_{(i +1)s}^{(i+2) s} z^t \phi(z)\d z \geq - 4 k s^2\int_{(i
	+1)s}^{(i+2) s} z^t \phi(z)\d z \;. \]
Thus, we have 
\[ -4ks^2 (t-1)!! \leq \sum_{i=0}^{1/(s \eps^k)}(-1)^i\int_{i
	s}^{(i+1) s} z^t \phi(z)\d z\leq 4ks^2 \int_{-\infty}^{\infty}z^t \phi(z)=4ks^2
(t-1)!! \;.\]
Choosing $s=\eps^{(k+1)/2}/k^k$ and setting $\eps=\eps/2$, the proof follows.
\end{proof}

Proposition~\ref{prop:main_structural} follows from the above using a compactness argument.

\begin{proof}[Proof of Proposition~\ref{prop:main_structural}]
	For every $\eps>0$, using the function $f'$ from Fact~\ref{fct:function} and
	Lemma~\ref{lem:main_diff}, we can obtain a function $f_{\eps}$ such that $|\E_{z
		\sim \normal(0,1)}[f_\eps(z)z^t]|\leq\eps$, for every non-negative integer $t<k$
	and the function $f_{\eps}$ is at most $(k+1)$-piecewise constant.
	Let $\vec M: \mathbb{\overline R}^{k} \mapsto \R^{k}$, where $M_i(\vec
	b)=\sum_{n=0}^{k}(-1)^{n+1}\int_{b_n}^{b_{n+1}} z^i \phi(z) \d z$ and
	$b_1\leq b_2\leq \ldots \leq b_{k}$, $b_0=-\infty$ and $b_{k+1}=\infty$. Here we assume
	without loss of generality that before the first breakpoint the function is
	negative because  we can always set the first breakpoint to be $-\infty$. It is
	clear that the function $\vec M$ is a continuous map and $\mathbb{\overline R}^{k+1}$ is a compact set,
	thus $\vec M\left(\mathbb{\overline R}^{k+1}\right)$ is a compact set.
	We also have that for every $\eps>0$ there is a point $\vec b\in \mathbb{\overline R}^{k+1}$
	such that $|\dotp{\vec M(\vec b)}{\vec e_i} |\leq \eps$. Thus, from compactness, we have that there
	exists a point $\vec b^*\in \mathbb{\overline R}^{k+1}$ such that $\vec M(\vec b^*)=\vec 0$.
	This completes the proof.
\end{proof}

 \section{SQ Lower Bound for ReLU Regression} \label{lb:relus}
In this section, we give the proof of Theorem~\ref{thm:ReLU}.
To prove our theorem, we construct a class $\mathcal{F}$ of Boolean-valued functions 
that is SQ hard to weakly learn. We use the SQ hardness of $\mathcal{F}$
to show that agnostically learning a ReLU with respect the square loss is also SQ hard. 

Our proof depends critically on the following technical result.

\begin{proposition}\label{prop:main-relu} 
For all $k \geq 1$, there exists an $O(k)$-piecewise constant function $f: \R \to \{\pm 1\}$ 
such that $\E_{z \sim \normal(0,1)}[f(z)z^t]=0$ for every non-negative integer $t\leq k$, 
and $\E_{z \sim \normal(0,1)}[f(z)\relu(z)] \geq 1/\poly(k)$.
\end{proposition}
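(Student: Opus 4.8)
The plan is to recast the problem as a linear optimization over $[-1,1]$-valued functions, observe that its optimum is automatically a $\{\pm 1\}$-valued, $O(k)$-piecewise constant function by a bang--bang argument, and finally lower bound the optimum value by exhibiting one good feasible point built from the orthogonal complement of the low-degree polynomials under the Gaussian measure restricted to a bounded interval. This last step is where Legendre-polynomial estimates come in.

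\smallskip\noindent\emph{Step 1 (reformulation).} Let $\mathcal{P}_k$ denote the polynomials of degree at most $k$ and set
\[
\gamma^\ast \;=\; \sup\!\Big\{\, \E_{z\sim\normal(0,1)}[h(z)\relu(z)] \;:\; h:\R\to[-1,1],\ \E_{z\sim\normal(0,1)}[h(z)z^t]=0 \text{ for } 0\le t\le k \,\Big\}.
\]
Since $\relu$ and every $p\in\mathcal{P}_k$ lie in $L^1(\phi)$, the standard $L^1$--$L^\infty$ duality for best approximation gives $\gamma^\ast=\mathrm{dist}_{L^1(\phi)}(\relu,\mathcal{P}_k)$, the supremum is attained, and an optimal $h$ can be taken of the form $f^\ast=\sign(\relu-p^\ast)$, where $p^\ast\in\mathcal{P}_k$ is a best $L^1(\phi)$-approximant of $\relu$. (One checks that $p^\ast$ is neither the polynomial $0$ nor the polynomial $z$ -- the only degree-$\le k$ polynomials agreeing with $\relu$ on a half-line -- which for $k\ge1$ holds because even from $\mathcal{P}_1$ one does strictly better; hence $\relu-p^\ast$ does not vanish on a half-line, $\{\relu=p^\ast\}$ is $\phi$-null, and $f^\ast$ is well-defined a.e.\ and orthogonal to $\mathcal{P}_k$ by the Kolmogorov criterion for $L^1$ approximation.) Now $\relu(z)-p^\ast(z)$ is continuous and equals $-p^\ast(z)$ on $(-\infty,0)$ and $z-p^\ast(z)$ on $(0,\infty)$, each a polynomial of degree at most $k$, so it has at most $2k+1$ sign changes; thus $f^\ast$ is $\{\pm1\}$-valued, $O(k)$-piecewise constant, satisfies $\E[f^\ast z^t]=0$ for all $0\le t\le k$, and $\E[f^\ast\relu]=\gamma^\ast$. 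Hence it suffices to prove $\gamma^\ast\ge 1/\poly(k)$, i.e.\ to exhibit one feasible $h$ correlating $1/\poly(k)$ with $\relu$ (equivalently, $\relu$ is $1/\poly(k)$-far from $\mathcal{P}_k$ in $L^1(\phi)$).

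\smallskip\noindent\emph{Step 2 (a good feasible point).} Work on $[-1,1]$ with the measure $\mu=\phi(z)\,dz$ restricted to $[-1,1]$, which is comparable to Lebesgue measure there. Let $p\in\mathcal{P}_k$ be the $L^2(\mu)$-orthogonal projection of $\relu$ onto $\mathcal{P}_k$ and set $R=(\relu-p)\cdot\1\{|z|\le1\}$, so that $\int_{-1}^1 R(z)z^t\phi(z)\,dz=0$ for $0\le t\le k$. Two classical estimates control $R$. A Nikolskii/Markov inequality gives $\|q\|_{L^\infty[-1,1]}=O(k)\|q\|_{L^2[-1,1]}$ for $q\in\mathcal{P}_k$, and since $\|p\|_{L^2(\mu)}\le\|\relu\|_{L^2(\mu)}\le1$ we get $\|R\|_{L^\infty[-1,1]}\le 1+\|p\|_{L^\infty[-1,1]}=O(k)$. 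Writing $\relu(x)=(x+|x|)/2$ and using the Fourier--Legendre expansion of $|x|$ on $[-1,1]$ (whose orthonormal coefficients are $\Theta(n^{-2})$ on the even degrees), the squared $L^2[-1,1]$-distance of $\relu$ to $\mathcal{P}_k$ is $\Theta\!\big(\sum_{n>k}n^{-4}\big)=\Omega(k^{-3})$, so $\|R\|_{L^2(\mu)}^2=\mathrm{dist}_{L^2(\mu)}^2(\relu,\mathcal{P}_k)=\Omega(k^{-3})$ by comparability. Take $h=R/\|R\|_{L^\infty[-1,1]}$. Then $\|h\|_\infty\le1$, $\E_{z\sim\normal(0,1)}[h(z)z^t]=0$ for $0\le t\le k$, and using $\relu=R+p$ on $[-1,1]$ together with $R\perp_{\mu}\mathcal{P}_k$,
\[
\E_{z\sim\normal(0,1)}[h(z)\relu(z)] \;=\; \frac{1}{\|R\|_{L^\infty[-1,1]}}\int_{-1}^1 R(z)\relu(z)\phi(z)\,dz \;=\; \frac{\|R\|_{L^2(\mu)}^2}{\|R\|_{L^\infty[-1,1]}} \;=\; \Omega(k^{-4}).
\]
Hence $\gamma^\ast=\Omega(k^{-4})=1/\poly(k)$, and by Step 1 the function $f:=f^\ast$ has all the properties claimed in Proposition~\ref{prop:main-relu}.

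\smallskip\noindent\emph{Main obstacle.} The crux is the correlation lower bound: one must certify that $\relu$ is only polynomially (not super-polynomially) far from low-degree polynomials, which relies on the non-analyticity of $\relu$ at the origin and the classical $\Theta(1/\poly(k))$ rate of polynomial approximation of $|x|$, and one must arrange the feasible point on a bounded interval so that the moment constraints hold \emph{exactly} rather than approximately. The reason to route everything through the $L^1(\phi)$ formulation in Step 1 is that it simultaneously produces the $\{\pm1\}$-rounding, the exact vanishing moments, and the $O(k)$ bound on the number of pieces, via the bang--bang structure of the optimizer; rounding and then re-fixing the moments of an ad hoc construction would be considerably messier. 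A few edge cases ($k$ bounded, and the verification that $\relu-p^\ast$ does not vanish on a half-line) are handled directly.
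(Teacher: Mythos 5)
Your proof is correct, but it takes a genuinely different route from the paper. The paper builds the hard univariate function in three stages: it first constructs an explicit $[-1,1]$-valued correlator via a Legendre projection of $\relu$ on $[-1,1]$ (lower-bounding the correlation by a Taylor argument at the kink of $\relu$), then rounds it to a Boolean function with only approximately vanishing moments by a probabilistic discretization, then reduces the number of breakpoints to $2k+5$ via a differential-equation/Vandermonde argument, and finally takes a compactness limit to make the moments vanish exactly. You instead get all three structural properties at once -- $\{\pm1\}$ values, exactly vanishing moments, and $O(k)$ pieces -- from the extremal structure of best $L^1(\phi)$ approximation: the Kolmogorov criterion makes $f^\ast=\sign(\relu-p^\ast)$ orthogonal to $\mathcal{P}_k$ (once you rule out $p^\ast\in\{0,z\}$, which your perturbation remark does; e.g. $z/2+b$ with small $b>0$ beats both), and piecewise-polynomiality of $\relu-p^\ast$ caps the sign changes at $O(k)$. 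Your quantitative step also differs: rather than the paper's local analysis near $0$, you feed in the $L^2(\mu)$ residual as a feasible point and invoke Nikolskii plus the classical $\Theta(n^{-2})$ decay of the orthonormal Legendre coefficients of $|x|$, yielding correlation $\Omega(k^{-4})$, noticeably better than the paper's $\Omega(k^{-20})$. The trade-off is that your argument leans on standard approximation-theory facts (existence and characterization of best $L^1$ approximants, Nikolskii, the Legendre expansion of $|x|$), whereas the paper's longer construction is more self-contained and reuses its breakpoint-reduction machinery from the halfspace section; both uses are legitimate, and your cited facts are invoked correctly.
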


\noindent The proof of Proposition~\ref{prop:main-relu} is given in Section~\ref{ssec:prop-main-relu}.

\medskip

We start with a brief overview of the proof.
To prove Proposition~\ref{prop:main-relu}, 
we first make essential use of Legendre polynomials to construct 
an explicit function $f':\R \to [-1,1]$ with the correct properties. By
rounding $f'$ to a Boolean-valued function, it is not hard to show that for
every $\eps>0$ there is a Boolean-valued function whose first $k$ moments are at
most $\eps$, and whose correlation with a ReLU is at least $1/\poly(k)$.
Using a slight variation of the techniques from Lemma~\ref{lem:main_diff}, 
we can obtain a function with these moments that is $O(k)$-piecewise constant.
Taking a limit of such functions with $\eps$ tending to $0$ gives the proposition.

The hard family of functions to learn will then be $\mathcal{F} = \{ F_{\bv} \}_{\bv \in S}$,
where $S$ is a set of nearly orthogonal unit vectors, and
$F_{\bv}(\bx) = C \cdot f(\langle \bv, \bx \rangle)$, 
for $C>0$ some appropriately chosen number of size polynomial in $k$. 
Since $f(\langle \bv, \bx \rangle)$ correlates with $\relu(\langle \bv, \bx \rangle)$, 
taking $C$ sufficiently large, we have
\[
\E\left[ \left( F_{\bv}(\bx)-\relu(\langle \bv, \bx \rangle) \right)^2 \right] < \E[F_{\bv}(\bx)^2] \left(1 - 1/\poly(k)\right) \;.
\]
If $\eps$ is a sufficiently small polynomial in $1/k$, any learner would need to return
a function $g$ where $\E[(F_{\bv}(\bx)-g(\bx))^2] < \E[F_{\bv}(\bx)^2] (1 - 1/\poly(k))$.
This implies both that $\E[(1/C^2) g^2]$ is not too large and that $g$ correlates
non-trivially with the Boolean-valued function $(1/C) F_{\bv}$. 
However, since the class of functions $\{ (1/C) F_{\bv} \}_{\bv \in S}$ has large SQ-dimension 
(because of the moment matching property established by Proposition~\ref{prop:main-relu}), 
Theorem~\ref{thm:ReLU} follows from Lemma~\ref{lem:fieldman_lemma}.

\medskip

We now give the proof of Theorem~\ref{thm:ReLU}, 
assuming Proposition~\ref{prop:main-relu} 

\begin{proof}[Proof of Theorem~\ref{thm:ReLU}]
The proof follows using the same construction as in Theorem~\ref{thm:LTF}, 
but using the $O(k)$-piecewise constant function $f$ from Proposition~\ref{prop:main-relu}.
Let $C(k)$ be a constant that depends on $k$ and $\mathcal{F}_k$ be the family of $O(k)$-decision lists of halfspaces, 
where each $F_{\vec v}\in \mathcal{F}_k$ has the form $ F_{\vec v}(\vec x)=C(k)\cdot f(\dotp{\vec v}{\vec x})$, 
for a unit vector $\vec v\in S$, where we use the set $S$ from Lemma~\ref{lem:near-orth-vec}. 
Let $\cal A$ be an agnostic SQ learner for ReLUs under Gaussian marginals.
We feed $\cal A$ a set of i.i.d. labeled examples from an arbitrary function $F_{\vec v}\in \mathcal{F}_k$. 
By definition, algorithm $\cal A$ computes a hypothesis $h:\R^d\mapsto \R$ such that 
\[\E_{\bx \sim \normal(\vec 0, \vec I)}[(h(\x)-F_{\vec v}(\x))^2] \leq  
\inf_{f \in \mathcal{C}_{\relu}}\E_{\bx \sim \normal(\vec 0,\vec I)}[(f(\x)- F_{\vec v}(\x))^2]+\eps\;.\]
We denote $\snorm{2}{g}^2=\E_{\bx \sim \normal(\vec 0, \vec I)}[g(\x)^2]$ for a function $g:\R^d\mapsto \R$. 
Let $C(k)=\frac{\snorm{2}{\relu}^2 }{\E_{\bx \sim \normal(\vec 0,\vec I)}[f(\dotp{\vec x}{\vec v})\relu(\dotp{\vec x}{\vec v})]}$. 
Then we have that
\begin{align*}
\E_{\bx \sim \normal(\vec 0, \vec I)}[(\relu(\dotp{\vec x}{\vec v})- F_{\vec v}(\x))^2]
&= \snorm{2}{F_{\vec v}}^2 + \snorm{2}{\relu}^2 - 2 \E_{\bx \sim \normal(\vec 0, \vec I)}[F_{\vec v}(\vec x)\relu(\dotp{\vec x}{\vec v})]\\
&=C^2(k)\snorm{2}{f}^2 -\snorm{2}{\relu}^2 \;.
\end{align*}
Furthermore, using that $\snorm{2}{f}^2=1$ and $\snorm{2}{\relu}^2=1/2$, 
if we choose $\eps=o(1/C^2(k))$, the algorithm returns a hypothesis such that
\[
\E_{\bx \sim \normal(\vec 0,\vec  I)}[(h(\x)- F_{\vec v}(\x))^2] \leq C^2(k)\left(1-\Omega(1/C^2(k))\right) \;.
\]
Thus, from the triangle inequality, we have that $\snorm{2}{h/C(k)}^2\leq 2\snorm{2}{f}^2$, and also 
\[2 \E_{\bx \sim \normal(\vec 0, \vec I)}\left[\frac{h(\x)}{C(k)} \frac{F_{\vec v}(\x)}{C(k)}\right]\geq \Omega(1/C^2(k))
+ \snorm{2}{h}^2/C^2(k) \geq \Omega(1/C^2(k))\;.\]
Finally,
\[
\E_{\bx \sim \normal(\vec 0, \vec I)}\left[\frac{h(\x)}{{\snorm{2}{h}}} \frac{F_{\vec v}(\x)}{{\snorm{2}{F_{\vec v}}}}\right]	
\geq  \frac 12 \E_{\bx \sim \normal(\vec 0,\vec I)}\left[\frac{h(\x)}{C(k)} \frac{F_{\vec v}(\x)}{C(k)}\right]\geq \Omega(1/C^2(k))\;.
\]
Let $h^*(\x)=\frac{h(\x)}{{\snorm{2}{h}}}$ and $F_{\vec v}^*(\x)=\frac{F_{\vec v}(\x)}{{\snorm{2}{F_{\vec v}}}} $. Then 
we have that $\E_{\bx \sim \normal(0, I)}\left[h^*(\x)F_{\vec v}^*(\x)\right]	\geq  \Omega(1/C^2(k))$.
Thus, using Proposition~\ref{prop:main-relu} to bound $C(k)$, we get that  
\[
\E_{\bx \sim \normal(\vec 0, \vec I)}\left[h^*(\x)F_{\vec v}^*(\x)\right]	\geq \Omega(1/\poly(k) )\;.
\]
Since the function $F_{\vec v}$ is an $O(k)$-decision list of halfspaces, 
we can apply Proposition~\ref{prop:LTF} to get that any SQ algorithm needs $d^{\Omega(k)}$ queries to 
$\mathrm{STAT}(d^{-\Omega(k)})$ to get 
$\E_{\bx \sim \normal(\vec 0, \vec I)}\left[h^*(\x)F_{\vec v}^*(\x)\right]\geq d^{-\Omega(k)}$. 
Thus, in order to learn with error $\opt+\eps$, for $\eps=o(1/\poly(k))$, 
the algorithm $\cal A$ needs to use $d^{\Omega((1/\eps)^c)}$ queries to 
$\mathrm{STAT}(d^{-\Omega((1/\eps)^c)})$, for a constant $c>0$.
\end{proof}

\subsection{Proof of Proposition~\ref{prop:main-relu}} \label{ssec:prop-main-relu}

To prove Proposition~\ref{prop:main-relu}, we first need to prove that there exists a function 
that has non-trivial correlation with the ReLU and whose first $k$ moments are zero.

We have the following crucial proposition.

\begin{proposition}\label{prop:relu} 
Let $k$ be a positive integer. There exists a function $f:\R\mapsto[-1,1]$ 
such that $\E_{z \sim \normal(0,1)}[f(z)z^t]=0$, for $0\leq t\leq k$, and 
$\E_{z \sim \normal(0,1)}[f(z)\relu(z)]>1/\poly(k)$.
\end{proposition}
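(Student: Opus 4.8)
The plan is to realize $f$ as a function supported on $[-1,1]$, of the form $f(z)=h(z)/\phi(z)$ for $|z|\le 1$ and $f(z)=0$ otherwise, which converts the claim into a (Lebesgue‑)weighted polynomial‑approximation statement on $[-1,1]$ that I will settle using Legendre polynomials. The first step is the reduction: for any $h\colon[-1,1]\to\R$ and the resulting $f$ one has $\E_{z\sim\normal(0,1)}[f(z)z^t]=\int_{-1}^{1}h(z)z^t\,\d z$ for every $t$, and $\E_{z\sim\normal(0,1)}[f(z)\relu(z)]=\int_{-1}^{1}h(z)\relu(z)\,\d z$; moreover $|f|\le 1$ on $[-1,1]$ is equivalent to $|h(z)|\le\phi(z)$ there, which is implied by $\|h\|_{\infty}\le\phi(1)=e^{-1/2}/\sqrt{2\pi}$. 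So it suffices to produce $h\in L^2[-1,1]$ that is orthogonal to every polynomial of degree at most $k$, satisfies $\|h\|_{\infty}\le\phi(1)$, and has $\int_{-1}^{1}h(z)\relu(z)\,\d z\ge 1/\poly(k)$.

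For the construction, let $\Pi_{\le k}$ denote the orthogonal projection in $L^2[-1,1]$ onto polynomials of degree at most $k$ (equivalently, the $k$‑th partial sum of the Legendre series), set $g:=\relu|_{[-1,1]}-\Pi_{\le k}\relu$, and take $h:=c\,g$ with $c:=\phi(1)/\|g\|_{\infty}$. Then $g\perp\mathcal P_k$, and since $\relu-g=\Pi_{\le k}\relu\in\mathcal P_k$ we get $\int_{-1}^{1}g\,\relu\,\d z=\|g\|_{L^2[-1,1]}^2$, hence $\int_{-1}^{1}h\,\relu\,\d z=c\,\|g\|_2^2$. It then remains to bound $\|g\|_{\infty}$ from above and $\|g\|_2$ from below. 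For the upper bound I would invoke Jackson's theorem: since $\relu$ is $1$‑Lipschitz on $[-1,1]$ there is a degree‑$k$ polynomial $p^{\ast}$ with $\|\relu-p^{\ast}\|_{\infty}=O(1/k)$; writing $g=(\relu-p^{\ast})-\Pi_{\le k}(\relu-p^{\ast})$ and using that the Lebesgue constant of the Legendre projection on $[-1,1]$ is $O(\sqrt k)$ yields $\|g\|_{\infty}=O(1/\sqrt k)$, so $c=\Omega(\sqrt k)$. For the lower bound, expand in the orthonormal Legendre basis $\{\tilde L_n\}$: $\|g\|_2^2=\sum_{n>k}\langle\relu,\tilde L_n\rangle^2$; writing $\relu(z)=\tfrac12(|z|+z)$ and using that $z$ is orthogonal to $\tilde L_n$ for $n\ge 2$, the nonzero coefficients beyond degree $k$ are $\langle\relu,\tilde L_{2m}\rangle=\tfrac12\langle|z|,\tilde L_{2m}\rangle$, and the classical closed form $\int_0^1 z\,L_{2m}(z)\,\d z=\tfrac{(-1)^{m+1}(2m-2)!}{2^{2m}(m-1)!(m+1)!}$ together with Stirling gives $|\langle\relu,\tilde L_{2m}\rangle|=\Theta(m^{-2})$, whence $\|g\|_2^2=\Theta(k^{-3})$ (even a single coefficient with $2m\in(k,2k]$ already gives $\|g\|_2^2=\Omega(k^{-4})$).

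Combining, $\int_{-1}^{1}h(z)\relu(z)\,\d z=c\,\|g\|_2^2=\Omega(\sqrt k\cdot k^{-3})=\Omega(k^{-5/2})$, while $\|h\|_{\infty}\le\phi(1)$ by choice of $c$ and $h\perp\mathcal P_k$ by construction; plugging into the reduction of the first paragraph yields $f\colon\R\to[-1,1]$ with $\E_{z\sim\normal(0,1)}[f(z)z^t]=0$ for $0\le t\le k$ and $\E_{z\sim\normal(0,1)}[f(z)\relu(z)]=\Omega(k^{-5/2})=1/\poly(k)$, as claimed.

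The main obstacle is the pair of estimates in the second paragraph. The $\|g\|_{\infty}$ bound must cope with the fact that the Legendre series of $\relu$ could a priori exhibit a Gibbs‑type overshoot at the interior kink $z=0$; subtracting a near‑optimal uniform approximant \emph{before} applying the crude $O(\sqrt k)$ bound on the Lebesgue constant sidesteps this cleanly and in fact makes $\|g\|_{\infty}$ small. The $L^2$ lower bound is where the genuine computation lives: one needs the explicit evaluation of the Legendre coefficients of $|z|$ (a standard beta‑integral), from which the polynomial decay — and hence the fact that $\relu$ is not well approximated in $L^2[-1,1]$ by low‑degree polynomials — follows by Stirling. (I would also keep track that the required moment condition here is $0\le t\le k$, i.e.\ $k+1$ constraints, which only shifts the degree bookkeeping by one.)
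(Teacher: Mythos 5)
Your construction is correct and coincides with the paper's at its core: you support $f$ on $[-1,1]$, divide out the Gaussian density, and take $\relu$ minus its $L^2[-1,1]$ projection onto degree-$\le k$ polynomials (the Legendre partial sum), so that orthogonality turns the correlation into $c\,\|\relu-p\|_{L^2[-1,1]}^2$ and the moment conditions hold for free. Where you genuinely diverge is in the two quantitative estimates. For the sup-norm control the paper simply bounds the projection by $|p(z)|\le 2k^2$ using $|P_t|\le 1$ and coefficient bounds, whereas you invoke Jackson's theorem plus the $O(\sqrt k)$ Lebesgue constant of the Legendre projection to get $\|\relu-\Pi_{\le k}\relu\|_\infty=O(1/\sqrt k)$; for the $L^2$ lower bound the paper runs a local argument at the kink (Taylor expansion of $p$ on $[-\eps,\eps]$ with $|p''|\le 7k^6$, showing an affine function cannot track $|z|/2$ there), yielding only $\Omega(1/k^{20})$, whereas you compute the classical Legendre coefficients of $|z|$ in closed form (which I checked, e.g.\ at $m=1$, and whose Stirling asymptotics give normalized coefficients $\Theta(m^{-2})$) and sum the tail to get $\Theta(k^{-3})$, hence an overall correlation $\Omega(k^{-5/2})$. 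Your route relies on two external classical facts (Jackson's theorem and the Legendre Lebesgue constant) that the paper avoids, but it is sharper and arguably cleaner; note also that even with the paper's crude $|p|\le 2k^2$ bound in place of the Lebesgue-constant step, your coefficient computation alone already gives $\Omega(k^{-5})$, so the argument is robust. Both proofs deliver the $1/\poly(k)$ correlation the proposition asks for.
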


The proof of Proposition~\ref{prop:relu} requires analytic properties of the Legendre poynomials
and is deferred to Section~\ref{ssec:proof-prop-relu}. 
In the main part of this subsection, we prove Proposition~\ref{prop:main-relu}, 
assuming Proposition~\ref{prop:relu}.

In the following lemma, we show that there exists a piecewise constant Boolean-valued function
with near-vanishing moments of degree at most $k$ and non-trivial correlation with the ReLU.

\begin{lemma}\label{lem:initial_relu} 
For any $\eps>0$ and any non-negative integer $k$, 
there exists a piecewise constant function $G:\R\mapsto\{\pm1 \}$ such that
$|\E_{z \sim \normal(0,1)}[G(z)z^t]|\leq \eps$, for $0\leq t\leq k$, and 
$\E_{z \sim \normal(0,1)}[G(z)\relu(z)]> 1/\poly(k) + O(\eps)$.
\end{lemma}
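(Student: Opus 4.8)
The plan is to take the function $f':\R\to[-1,1]$ from Proposition~\ref{prop:relu}, which has $\E_{z\sim\normal(0,1)}[f'(z)z^t]=0$ for all $0\le t\le k$ and $\E_{z\sim\normal(0,1)}[f'(z)\relu(z)]>1/\poly(k)$, and round it to a Boolean-valued function $G$ while controlling both the moment error and the loss in correlation with the ReLU. The natural rounding is a randomized one: given $z$, set $G(z)=1$ with probability $(1+f'(z))/2$ and $G(z)=-1$ otherwise, so that $\E[G(z)\mid z]=f'(z)$ pointwise. However, we want a deterministic piecewise constant function, so instead I would discretize: partition a large interval $[-R,R]$ (with $R=\poly(k,\log(1/\eps))$ chosen so that the Gaussian tails outside contribute at most $O(\eps)$ to every moment of order $\le k$ and to the ReLU correlation) into $N$ tiny subintervals, and on each subinterval replace $f'$ by a Boolean value chosen so that the average of $G$ against the Gaussian measure on that piece matches the average of $f'$ against it as closely as possible. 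Concretely, on each small interval $I$ one can choose a single threshold point inside $I$ and set $G=+1$ on one side, $G=-1$ on the other, so that $\int_I G(z)\phi(z)\,dz = \int_I f'(z)\phi(z)\,dz$ exactly (the left-hand side ranges continuously over $[-\int_I\phi,\int_I\phi]$ as the threshold moves, and $|\int_I f'\phi|$ is at most $\int_I\phi$ since $|f'|\le 1$). This makes $G$ a piecewise constant function with $O(N)$ pieces that exactly matches the $0$-th "localized moment" on every piece.

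Next I would bound the moment error. For $1\le t\le k$, write $\E[(G(z)-f'(z))z^t\1\{z\in[-R,R]\}] = \sum_I \int_I (G(z)-f'(z))z^t\phi(z)\,dz$. On each piece $I$ we have $\int_I (G-f')\phi = 0$, so $\int_I (G-f')z^t\phi\,dz = \int_I (G-f')(z^t - c_I)\phi\,dz$ for any constant $c_I$; taking $c_I$ to be the value of $z^t$ at the midpoint of $I$ and using that $|z^t|$ has derivative bounded by $tR^{t-1}\le kR^{k-1}$ on $[-R,R]$, each term is at most $2\cdot(\text{length of }I)\cdot kR^{k-1}\cdot\int_I\phi$. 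Summing and using $\sum_I\int_I\phi\le 1$, the total is at most $2kR^{k-1}\cdot\max_I(\text{length of }I)$, which we force to be $\le\eps$ by choosing the number of pieces $N$ large enough (polynomially in $R$, $k$, $1/\eps$). Adding the tail contribution outside $[-R,R]$, which is $O(\eps)$ by the choice of $R$, gives $|\E[G(z)z^t]|\le O(\eps)$ for all $0\le t\le k$; the $t=0$ case is even cleaner since there the localized moments match exactly and only the tail contributes.

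Finally I would handle the ReLU correlation by the same localization trick: $\E[G(z)\relu(z)] = \E[f'(z)\relu(z)] + \sum_I\int_I(G-f')\relu(z)\phi\,dz + (\text{tail})$, and on each piece $\int_I(G-f')\relu(z)\phi\,dz = \int_I(G-f')(\relu(z)-c_I)\phi\,dz$ is small because $\relu$ is $1$-Lipschitz, giving total error at most $\max_I(\text{length of }I)$ plus the $O(\eps)$ tail. Hence $\E[G(z)\relu(z)]\ge \E[f'(z)\relu(z)] - O(\eps) \ge 1/\poly(k) - O(\eps)$, matching the claimed bound. The main obstacle is purely bookkeeping: making sure that a single choice of $R$ and $N$ simultaneously controls the tails and the discretization error for all $k+1$ moments and for the ReLU term, and verifying that the piecewise structure of the rounding does not blow up (it contributes $O(N)$ pieces, which is fine because the lemma allows an arbitrary number of pieces — the $O(k)$-piecewise bound is only recovered later in Proposition~\ref{prop:main-relu} via the $\eps\to 0$ limiting argument à la Lemma~\ref{lem:main_diff}). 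No genuinely hard analytic step is needed here beyond Proposition~\ref{prop:relu}.
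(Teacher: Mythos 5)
Your proposal is correct, and it reaches the same statement by a noticeably different execution of the rounding step. The paper also starts from the function of Proposition~\ref{prop:relu} and discretizes into tiny intervals, but it rounds \emph{randomly}: on each interval $G$ is set to $\pm 1$ with probabilities chosen so the local Gaussian-weighted average of $G$ matches that of $f'$ only in expectation, and then Hoeffding plus a union bound and the probabilistic method extract one good realization; outside the support $[-1,1]$ of $f'$ it reuses the alternating-sign tail construction of Fact~\ref{fct:function} so that the tail moment contributions nearly cancel. You instead make the rounding deterministic: on each small interval you place a single threshold so that $\int_I G\phi = \int_I f'\phi$ holds \emph{exactly} (an intermediate-value argument, valid since $|f'|\le 1$), and then the higher-moment and ReLU errors follow from a purely deterministic Lipschitz/mean-value bound, $|\int_I (G-f')(z^t-c_I)\phi| \le 2\,\ell_I\, t R^{t-1}\int_I\phi$, summed over pieces; tails are handled by truncating at a large $R=\poly(k,\log(1/\eps))$ where the absolute Gaussian tail integrals of $|z|^t$ ($t\le k$) and $\relu$ are already $O(\eps)$, so no cancellation is needed. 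What your route buys is the elimination of the concentration/probabilistic-method machinery and of the special tail construction, at the cost of a slightly more careful bookkeeping of $R$, the mesh size ($\max_I \ell_I \lesssim \eps/(kR^{k-1})$, hence $N=\poly(k,R,1/\eps)$ pieces), and the derivative bound $tR^{t-1}$; both approaches produce a function with an unbounded (but finite) number of pieces, which is all the lemma needs, since the reduction to $O(k)$ pieces happens later in Lemma~\ref{lem:relu_decrease} and the compactness step of Proposition~\ref{prop:main-relu}. The only small point to make explicit in a full write-up is the definition of $G$ outside $[-R,R]$ (any fixed $\pm 1$ assignment works, since you bound the tail contribution in absolute value), and the usual constant-factor rescaling of $\eps$ to turn your $O(\eps)$ moment bounds into the stated $\le \eps$.
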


\begin{proof}
The proof is similar to the proof of Fact~\ref{fct:function}. 
The main difference here is that we need to construct a function that 
will also have non-trivial correlation with the ReLU. 
To do this, we use a probabilistic argument to show that there exists a function 
that is bounded in the range $[-1,1]$, that has non trivial correlation, 
and then we discretize the function as in Fact~\ref{fct:function}.
Let $f$ be the function from Proposition~\ref{prop:relu}. We split the interval
$[-1,1]$ into sub-intervals of length $\delta$ and we define the random
piecewise constant function $G$ in each interval $[z_0,z_0+\delta]$ as $G(z)= 1$
\nnew{with probability $(1 + \int_{z_0}^{z_0+\delta}f(z) \phi(z)\d z / \int_{z_0}^{z_0+\delta}\phi(z)\d z)/2$ and 
$G(z)=-1$ with probability $(1 - \int_{z_0}^{z_0+\delta}f(z) \phi(z)\d z / \int_{z_0}^{z_0+\delta}\phi(z)\d z)/2$. 
Thus, in each interval, we have $\E[G(z)]= \int_{z_0}^{z_0+\delta}f(z) \phi(z)\d z / \int_{z_0}^{z_0+\delta}\phi(z)\d z$. 
Then, for any $|z_0|\leq1-\delta$, we have that
\begin{align*}
\E\left[\int_{z_0}^{z_0 + \delta} G(z) z^t \phi(z)\d z \right]
&=\E\left[\int_{z_0}^{z_0 + \delta} G(z) (z_0+ O(\delta))^t \phi(z)\d z \right] = 
\int_{z_0}^{z_0 + \delta} f(z)\phi(z) (z_0+ O(\delta))^t\d z \\
& = \int_{z_0}^{z_0 + \delta}f(z)\phi(z)z^t \d z +  \int_{z_0}^{z_0 + \delta} t\cdot O((|z_0| +\delta)^{t-1}\delta) \d z \\
& = \int_{z_0}^{z_0 + \delta} f(z)z^t \phi(z)\d z  + t\cdot O\left((|z_0| +\delta)^{t-1}\delta^2\right)\;,
\end{align*}
where we used the Taylor series $z^t= (z_0+O(\delta))^t +  t\cdot O(\delta (|z_0| + \delta)^{t-1})$.
Thus, we obtain
\[ 
\E\left[\int_{-1}^{1} G(z) z^t \phi(z)\d z \right] =  \int_{-1}^{1} f(z)z^t \phi(z)\d z +t\cdot O(\delta)=t\cdot O(\delta)\;, 
\]
where we used that all the moments of $f$ with degree at most $k$ are zero and that $|z_0|+\delta\leq 1$.
Moreover, for $0\leq z_0\leq 1$, it holds that 
\begin{align*}
\E\left[\int_{z_0}^{z_0 + \delta} G(z)\relu(z) \phi(z)\d z \right]
&= \E\left[\int_{z_0}^{z_0 + \delta} G(z) z \phi(z)\d z \right]=  \int_{z_0}^{z_0 + \delta} f(z)\relu(z) \phi(z)\d z 
+ t\cdot O\left((|z_0| +\delta)^{t-1}\delta^2\right)\;,
\end{align*}
where we used the same method as before.}
Thus, it follows that
\[ 
\E\left[\int_{0}^{1} G(z) \relu(z) \phi(z)\d z \right] =  \int_{0}^{1} f(z) \relu(z) \phi(z)\d z +t\cdot O(\delta)>1/\poly(k) +t\cdot O(\delta)\;.
\]
Define the random variable $X_{i,t} = \int_{i\cdot \delta}^{i\cdot \delta + \delta} G(z) z^t \phi(z)\d z$ 
and $X_t=\sum_{i=-1/\delta}^{1/\delta} X_{i,t}$. Using Hoeffding bounds, we have that  
\[
\pr[|X_t-\E[X_{t}]|>\sqrt{\delta}\log(4/(t+1))] \leq 1/(2(t+1))\;,
\]
where we used that $|X_{i,t}|\leq \delta$. By the union bound, we get that
there is positive probability that all $X_t$ are within $\pm \sqrt{\delta}\log(4/(t+1))$ from the mean value, 
and thus, from the probabilistic method there is a function with this property.
Furthermore, we round the rest of the values of $G(z)$ as in the proof of
Fact~\ref{fct:function} (because $\relu(z)=z$ for $z>0$). Choosing the
correct constant value of $\delta$, the result follows.
\end{proof}
\begin{lemma}\label{lem:relu_decrease} 
Let $m$ and $k$ be positive integers such that $m>2k+5$ and $\eps>0$.
If there exists an $m$-piecewise constant function $f:\R \mapsto \{\pm 1\}$
such that $|\E_{z \sim \normal(0,1)}[f(z)z^t]|<\eps$ for all non-negative integers $t\leq k$, and 
$\E_{z \sim \normal(0,1)}[f(z)\relu(z)]> 1/\poly(k) + O(\eps)$, then there exists an at most
$(2k+5)$-piecewise constant function $g :\R \mapsto \{\pm 1\}$ such that $|\E_{z \sim \normal(0,1)}[g(z)z^t]|<\eps$ 
for all non-negative integers $t \leq k$ and $\E_{z \sim \normal(0,1)}[g(z)\relu(z)]> 1/\poly(k) + O(\eps)$.
\end{lemma}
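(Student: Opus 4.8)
The plan is to mimic the proof of Lemma~\ref{lem:main_diff}, but now carrying \emph{two} extra coordinates in the direction vector $\vec u$ rather than one, so that along the flow we can preserve not only the first $k$ moments but also the correlation with the ReLU. Concretely, write the $m$-piecewise constant function as $F(z_1,\ldots,z_{m-1},z)$ with breakpoints $\vec z = (z_1,\ldots,z_{m-1})$, and define $M_i(\vec z) = \E_{z\sim\normal(0,1)}[F(\vec z,z)z^i]$ for $0\le i\le k$ together with $R(\vec z) = \E_{z\sim\normal(0,1)}[F(\vec z,z)\relu(z)]$. This gives a map into $\R^{k+2}$. As in Lemma~\ref{lem:main_diff}, the partial derivative $\partial_{z_j}$ of $M_i$ is $2a_{j-1}z_j^i\phi(z_j)$, and the partial derivative of $R$ is $2a_{j-1}\relu(z_j)\phi(z_j)$. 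Since $m-1 > 2k+4 \geq (k+2) + (k+2)$, we have at least $k+3$ free breakpoints, so we can look for a direction $\vec u$ supported on the first $k+2$ coordinates (the "free" variables $\hat{\vec u}$) plus a fixed nonzero last coordinate $\vec u_{m-1}=1$, such that the directional derivative of the combined map $(\vec M, R)$ vanishes.

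The key linear-algebra step is to show this system is solvable for generic $\vec z$ with distinct coordinates. The system $\sum_{j=1}^{k+2} \partial_{z_j}(\vec M,R)\,\hat{\vec u}_j = -\partial_{z_{m-1}}(\vec M,R)$ has coefficient matrix that factors as $\vec W \vec D$, where $\vec D$ is the diagonal matrix with entries $2a_{j-1}\phi(z_j)$ and $\vec W$ is the $(k+2)\times(k+2)$ matrix whose $j$-th column is $(1, z_j, z_j^2, \ldots, z_j^k, \relu(z_j))^T$. This is almost a Vandermonde matrix; the last row $\relu(z_j)$ replaces what would be the degree-$(k+1)$ row. I expect this matrix to be nonsingular for all $\vec z$ with distinct coordinates \emph{except} possibly on a measure-zero set — and in fact it should be nonsingular whenever all $z_j$ are distinct, since if some linear combination $c_0 + c_1 z + \cdots + c_k z^k + c_{k+1}\relu(z)$ vanished at $k+2$ distinct points, then on the subset of negative points it is a polynomial of degree $\le k$ vanishing at those points, and similarly on the positive side, forcing $c_0=\cdots=c_k=0$ and then $c_{k+1}=0$ as long as at least one $z_j>0$; one can always relabel/choose the $k+2$ free breakpoints to include a positive one, or handle the all-negative case by noting $\relu\equiv 0$ there which is degenerate but then the ordinary Vandermonde argument with one of the other breakpoints suffices. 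This nonsingularity step is the main obstacle and requires the most care.

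Given solvability, define $\vec u(\vec z)$ as a continuous selection of the solution direction (normalized so its last coordinate is $1$), set up the ODE $\vec v(0) = \vec b$ (the original breakpoints), $\vec v'(T) = \vec u(\vec v(T))$, and run it. By the chain rule, $\frac{\d}{\d T}(\vec M(\vec v(T)), R(\vec v(T))) = \vec 0$, so all $k+1$ moment bounds and the ReLU correlation are preserved exactly along the flow. As in Lemma~\ref{lem:main_diff}, since $\vec v_{m-1}'(T)=1$ the flow cannot oscillate and must terminate when either two breakpoints collide or the first/last breakpoint escapes to $\pm\infty$; in each case we lose at least one breakpoint and obtain an $(m-1)$-piecewise constant function with the same moments and same ReLU correlation. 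Iterating from $m$ down to $2k+5$ yields the claimed $g$. The only subtlety relative to Lemma~\ref{lem:main_diff} is that collisions among the first few breakpoints still leave us with a function on which the system may temporarily become singular; I would address this exactly as in Lemma~\ref{lem:main_diff} by observing that at such a limit point we already have a function with strictly fewer pieces, and then restart the argument — so the induction on $m$ goes through cleanly down to the threshold $2k+5$ (two more than the $k+1$ in Lemma~\ref{lem:main_diff} to accommodate the two extra conserved quantities, namely the degree-$k$ moment and the ReLU correlation, beyond what a $(k+1)$-piece function can generically carry).
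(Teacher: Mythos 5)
There is a genuine gap at exactly the step you flag as the main obstacle: the claimed nonsingularity of the $(k+2)\times(k+2)$ matrix with columns $(1,z_j,\ldots,z_j^k,\relu(z_j))^T$ is false for general distinct breakpoints, and the root-counting argument you sketch for it does not work. A nonzero combination $c_0+c_1z+\cdots+c_kz^k+c_{k+1}\relu(z)$ can vanish at $k+2$ distinct points when those points are split across the origin: for $k=2$, the function $4+5z+z^2-10\,\relu(z)$ vanishes at $z=-4,-1,1,4$, so the corresponding $4\times 4$ matrix is singular even though the points are distinct and include positive ones. Your argument only forces $c_0=\cdots=c_k=0$ when more than $k$ of the points lie on a single side of $0$; in the mixed case the two one-sided polynomials each vanish at too few points to be forced to zero, as the example shows. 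Consequently the direction field $\vec u(\vec z)$ is not defined (and can blow up) on a set that the flow has no reason to avoid, so the ODE argument as written does not go through. A symptom of the problem is that your scheme, if it worked, would let you iterate down to roughly $k+3$ pieces rather than $2k+5$, which is stronger than what the structure of the problem supports.

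The paper sidesteps this by never mixing the two sides of the origin. It first flows only positive breakpoints: there $\relu(z_j)=z_j$, so the ReLU row of the derivative coincides with the $M_1$ row, the extra constraint is redundant, and the linear system is a genuine Vandermonde system in the positive breakpoints alone; this reduces the positive breakpoints to at most $k+2$, with the additional termination case that the smallest positive breakpoint hits $0$. It then flows only negative breakpoints, where $\frac{\partial}{\partial z_j}M_c(\vec z)=0$, so the ReLU correlation is untouched and one again has a pure Vandermonde system. This two-phase structure is precisely what yields the $2k+5$ count (about $k+2$ breakpoints on each side, plus possibly one at $0$). To repair your proof you would either have to restrict the support of $\vec u$ to one side of the origin at a time — which is the paper's argument — or establish nonsingularity of the mixed matrix, which the counterexample rules out in general.
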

\begin{proof}
	This proof is similar to the proof of Lemma~\ref{lem:main_diff}. 
	The only difference is that we have to keep also the correlation with the ReLU constant. 
	For completeness, we provide a full proof. 
	
	Let $\{b_1,b_2,\ldots,b_{m-1}\}$ be the breakpoints of $f$. 
	Let $F(z_1, z_2, \ldots, z_{m-1},z): \mathbb{\overline R}^m \mapsto \R$ 
	be an $m$-piecewise constant function with breakpoints on $z_1, \ldots, z_{m-1}$, 
	where $z_1<z_2< \ldots <z_{m-1}$ and $F(b_1, b_2, \ldots, b_{m-1},z)=f(z)$. 
	For simplicity, let $\vec z=(z_1, \ldots, z_{m-1})$ and  define 
	$M_i(\vec z)= \E_{z \sim \normal(0,1)}[F(\vec z, z)z^i]$, for all $0 \leq i \leq k$ and
	$M_c(\vec z)= \E_{z \sim \normal(0,1)}[F(\vec z, z)\relu(z)]$. Finally, let
	$\vec M(\vec z)=[M_0(\vec z), M_1(\vec z), \ldots, M_{k}(\vec z),  arM_c(\vec z)]^T$. 
	It is clear that 
	$$M_i(\vec z)=\sum_{n=0}^{m-1}\int_{z_n}^{z_{n+1}} F(\vec z, z) z^i \phi(z) \d z = 
	\sum_{n=0}^{m-1}a_n\int_{z_n}^{z_{n+1}} z^i \phi(z) \d z \;,$$ and 
	$$M_c(\vec z)=\sum_{n=0}^{m-1}\int_{z_n}^{z_{n+1}} F(\vec z, z) z \1\{z>0\} \phi(z) \d z
	=\sum_{n=0}^{m-1}a_n\int_{z_n}^{z_{n+1}} z \1\{z>0\}\phi(z) \d z \;,$$ 
	where $z_0= -\infty$, $z_m=\infty$, and $a_n$ is the sign of $F(\vec z,z)$ in the interval $(z_n,z_{n+1})$. 
	Note that $a_n=-a_{n+1}$ for every $0\leq n<m$.
	By taking the derivative of $M_c$ and $M_i$ in $z_j$, for $0<j<m$, we get that
	\[\frac{\partial}{\partial z_j} M_i(\vec z)= 2a_{j-1} z_j^i \phi(z_j) \quad
	\text{and}\quad 
	\frac{\partial}{\partial z_j} M_c(\vec z)= \begin{cases}
	2a_{j-1} z_j
	\phi(z_j),  \quad \text{if } a_j>0\\
	0,\quad \text{if}\quad  a_j\leq 0
	\end{cases} 
	\;.\]
	Combining the above, we get
	\[ \frac{\partial}{\partial z_j} \vec M(\vec z)= \begin{cases}
	2 a_{j-1}\phi(z_j) [1, z_j^1, \ldots , z_j^{k},z_j]^T,\quad \text{if }\quad
	z_j>0\\
	2  a_{j-1}\phi(z_j) [1, z_j^1, \ldots , z_j^{k},0]^T,\quad \text{if}\;\;\quad
	z_j\leq 0\;.
	\end{cases}
	\]
	We first work with the positive breakpoints. Let $i_0$ be the
	index of the first positive breakpoint and assume that the positive breakpoints
	are $m'>k+2$. We argue that there exists a vector $\vec u\in \R^{m-1}$
	such that  $\vec u=(0, \ldots, 0, \vec u_{i_0+1},\ldots,\vec u_{i_0+k+2},0,0,\ldots,0,1)$ 
	and the directional derivative of $\vec M$ in $\vec u$ is zero. 
	To prove this, we construct a system of linear equations, such that
	$\nabla_{\vec u} M_i(\vec z)=0$ for all $0\leq i\leq k$ and $\nabla_{\vec u} M_c(\vec z)=0$.
	Indeed, we have
$\sum_{j=1}^{k} \frac{\partial}{\partial z_j}  M_i(\vec z) \vec u_j
	= - \frac{\partial}{\partial z_{m-1}}  M_i(\vec z) $ or 
	$\sum_{j=1}^{k} a_{j-1} z_j^i \phi(z_j)\vec u_j=- a_{m-2} z_{m-1}^i \phi(z_{m-1})$ 
	and $\sum_{j=1}^{k} a_{j-1} z_j \phi(z_j)\vec u_j \1\{z_j\geq 0\} =- a_{m-2} z_{m-1} \phi(z_{m-1})\1\{z_{m-1}\geq 0\} $,
	which is linear in the variables $\vec u_j$. Note that the last equation is the same equation as the 
	$\nabla_{\vec u} M_1(\vec z)=0$, because we have positive breakpoints only.  
	Let $\hat{\vec u}$ be the vector with the variables from index $i_0+1$ to $i_0+k+2$,
	and let $\vec w$ be the vector of the right hand side of the system, i.e., $\vec w_i=- a_{m-2} z_{m-1}^i \phi(z_{m-1})$. 
	Then this system can be written in matrix form as $\vec V \vec D\hat{ \vec u}=\vec w$, 
	where $\vec V$ is the Vandermonde matrix, i.e., the matrix that is $\vec V_{i,j}=\alpha_i^{j-1}$, 
	for some values $\alpha_i$ and $\vec D$ is a diagonal matrix.
	In our case, $\vec V_{i,j}=z_i^{j-1}$ and $\vec D_{j,j}= 2 a_{j-1}\phi(z_j)$.
	It is known that the Vandermonde matrix has full rank iff for all $i\neq j$ we have $\alpha_i\neq \alpha_j$,
	which holds in our setting. Thus, the matrix $\vec V \vec D$ is nonsingular and there exists a solution to the equation.
Thus, there exists a vector $\vec u$ with our desired properties and, moreover,
	any vector in this direction is a solution to this system of linear equations.
	Note that the vector $\vec u$ depends on the value of $\vec z$,
	thus we consider $\vec u(\vec z)$ be the (continuous) function that returns a vector $\vec u$ given $\vec z$.

	We define a differential equation for the function $\vec v:\mathbb{\overline R}\mapsto\mathbb{\overline R}^{m-1}$, 
	as follows: $\vec v(0)= \vec b$, where $\vec b=(b_1, \ldots, b_{m-1})$, and
	$ \vec v'(T)=\vec u(\vec v(T))$ for all $T \in \mathbb{\overline R}$.
	If $\vec v$ is a solution to this differential equation, then we have:
	\[
	\frac{\d}{\d T} \vec M(\vec v(T))=\frac{\d}{\d \vec v(T)} \vec M(\vec v(T)) \frac{\d}{\d T} \vec v(T) 
	= \frac{\d}{\d \vec v(T)} \vec M(\vec v(T)) \vec u(\vec v(T)) =\vec 0 \;,
	\] 
	where we used the chain rule and
	that the directional derivative in the $\vec u(\vec v(T))$ direction is zero.
	This means that the function $\vec M(\vec v(t))$ is constant and, for all $0\leq j<k$, 
	we have $|M_j|< \eps$, because we have that $|\E_{z \sim \normal(0,1)}[F(z_1,\ldots, z_{m-1},z)z^t]|<\eps$. 
	Furthermore, since $\vec u(\vec v(T))$ is continuous in $\vec v(T)$, 
	this differential equation will be well founded and have a solution up until the point 
	where either two of the $z_i$ approach each other or one of the $z_i$ 
	approaches plus or to zero (the solution cannot oscillate, since $\vec v_{m-1}'(T)=1$ for all $T$).
	
	Running the differential equation until we reach such a limit, 
	we find a limiting value $\vec v^\ast$ of $\vec v(T)$ so that either:
	\begin{enumerate}[leftmargin=*]
		\item  There is an $i$ such that $\vec v_i^\ast=\vec v_{i+1}^\ast$, which
		gives us a function that is at most $(m-2)$-piecewise constant, i.e., taking $F(\vec v^\ast,z)$.
		\item  $\vec v_{m-1}^\ast = \infty$, which gives us an at most
		$(m-1)$-piecewise constant function, i.e., taking $F(\vec v^\ast,z)$.
		Since when the $\vec v_{m-1}^\ast= \infty$, the last breakpoint becomes $\infty$, we have one less breakpoint.
		\item $\vec v_{i_0+1}^\ast=0$, which gives us one less positive breakpoint.
	\end{enumerate}
	By iterating this method, we can get a function $f'$ that has at most $k+2$
	positive breakpoints. For the negative breakpoints, we work in a similar
	way, with the only difference that $\frac{\partial}{\partial z_j}M_c(\vec z)=0$,
	for all the negative breakpoints, and that the direction we increase has the
	form $\vec u=(-1,\vec u_1, \ldots, 0, \vec u_{k+2}, 0, \ldots, 0)$. 
	Thus, we get a function $g$ that has at most $2k+5$ breakpoints, 
	where we can get an extra breakpoint if $0$ is a breakpoint.
\end{proof}

\begin{proof}[Proof of Proposition~\ref{prop:main-relu}]
	For every $\eps>0$, using the function $f'$ from Lemma~\ref{lem:initial_relu} in
	Lemma~\ref{lem:relu_decrease}, we can obtain a function $f_{\eps}$ such that
	$|\E_{z \sim \normal(0,1)}[f_\eps(z)z^t]|\leq\eps$, 
	for every non-negative integer $t\leq k$ and 
	$\E_{z \sim \normal(0,1)}[f_{\eps}(z)\relu(z)]>1/\poly(k) + O(\eps)$.
	Moreover, the function $f_{\eps}$ is at most $(2k+5)$-piecewise constant. 
	
	Let $\vec M: \mathbb{\overline R}^{2k+5} \mapsto \R^{k+2}$, where 
	$M_i(\vec b)=\sum_{n=0}^{2k+5}(-1)^{n+1}\int_{b_n}^{b_{n+1}} z^i \phi(z) \d z$, for $0\leq i<k+2$, 
	and $M_{k+2}(\vec b)=\sum_{n=0}^{2k+5}(-1)^{n+1}\int_{b_n}^{b_{n+1}} \relu(z) \phi(z) \d z$, 
	where $b_0\leq b_1  \ldots \leq b_{2k+5}$, $b_0=-\infty$ and $b_{2k+6}=\infty$. 
	Here we assume without loss of generality that before the first breakpoint the function is
	negative, because we can always set the first breakpoint to be $-\infty$. It is
	clear that the function $\vec M$ is a continuous map and
	$\mathbb{\overline R}^{2k+5}$ is a compact set, thus 
	$\vec M\left(\mathbb{\overline R}^{2k+5}\right)$ is a compact set. 
	We also have that for every $\eps>0$, there is a point $\vec b\in \mathbb{\overline R}^{2k+5}$ 
	such that $|\dotp{\vec M(\vec b)}{\vec e_i} |\leq \eps$, for $0\leq i<k+2$, and 
	$\dotp{\vec M(\vec b)}{\vec e_{k+2}}>1/\poly(k) + O(\eps)$. 
	Thus, from compactness, we have that there exists a point 
	$\vec b^*\in \mathbb{\overline R}^{2k+5}$ such that 
	$|\dotp{\vec M(\vec b^*)}{\vec e_i} |=0$ for $0\leq i<k+2$, and 
	$\dotp{\vec M(\vec b^*)}{\vec e_{k+2}}>1/\poly(k)$.
\end{proof}

\subsection{Proof of Proposition~\ref{prop:relu}} \label{ssec:proof-prop-relu}
Below we
state some important properties of the Legendre polynomials that we use in our proofs.
\begin{fact}[see, e.g.,~\cite{Szego:39}]\label{fct:legendre} 
The Legendre polynomials, $P_n(z)$, for $n$ non-negative integer, satisfy the following properties:
\begin{itemize}
\item[(i)]$P_n(z)$ is a degree-$n$ univariate polynomial, with $P_0(z)=1$ and $P_1(z)=z$.\label{lege:1}
\item[(ii)] $\int_{-1}^{1} P_i(z)P_j(z) \d z =\delta_{ij} \frac{2}{2i+1}$, for all $i,j$ non-negative integers (orthogonality). \label{lege:2}
\item[(iii)] $|P_n(z)|\leq 1$, for all $|z|\leq 1$ (bounded).\label{lege:3}
\item[(iv)] $P'_{n}(z)=\sum_{t=0}^n \frac{2t+1}{2} P_t(z) $ (closed form of derivative).\label{lege:4}
\end{itemize}
\end{fact}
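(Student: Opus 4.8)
\emph{Plan.} The statement collects classical facts about Legendre polynomials, so the task is to recall the standard derivations. I would take the Rodrigues formula $P_n(z) = \frac{1}{2^n n!}\,\frac{\d^n}{\d z^n}(z^2-1)^n$ as the definition and obtain (i)--(iv) from it. Property (i) is immediate: $(z^2-1)^n$ has degree $2n$, so its $n$-th derivative is a polynomial of degree exactly $n$, and differentiating the leading monomial $z^{2n}$ shows the leading coefficient of $P_n$ is $\binom{2n}{n}/2^n$; the cases $n=0,1$ are a one-line computation giving $P_0\equiv 1$ and $P_1(z)=z$. Along the way I would also record $P_n(\pm 1)=(\pm 1)^n$: writing $(z^2-1)^n=(z-1)^n(z+1)^n$ and applying the Leibniz rule, at $z=1$ only the term in which all $n$ derivatives fall on $(z-1)^n$ survives, contributing $n!\cdot 2^n$, so $P_n(1)=1$, and symmetrically $P_n(-1)=(-1)^n$.

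For (ii) I would integrate by parts. To get $\int_{-1}^1 P_iP_j\,\d z=0$ for $i<j$ it suffices, since $P_i$ has degree $i<j$, to check $\int_{-1}^1 z^m\,\frac{\d^j}{\d z^j}(z^2-1)^j\,\d z=0$ for all $m<j$; integrating by parts $j$ times, every boundary term vanishes because $(z^2-1)^j$ has a zero of order $j$ at each endpoint, leaving $(-1)^j\int_{-1}^1 (z^2-1)^j\,\frac{\d^j}{\d z^j}z^m\,\d z=0$. For $i=j$, orthogonality kills every term of $P_j$ below the top degree, so $\int_{-1}^1 P_j^2\,\d z$ is a multiple of $\int_{-1}^1 z^j\,\frac{\d^j}{\d z^j}(z^2-1)^j\,\d z$, which after $j$ further integrations by parts becomes $(-1)^j j!\int_{-1}^1(z^2-1)^j\,\d z$; combining with the Beta/Wallis evaluation $\int_{-1}^1(1-z^2)^j\,\d z=2^{2j+1}(j!)^2/(2j+1)!$ and tracking the normalizing constants yields the value $2/(2j+1)$.

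Property (iv) then follows by expanding the degree-$(n-1)$ polynomial $P_n'$ in the Legendre basis: by (ii) the coefficient of $P_t$ is $\frac{2t+1}{2}\int_{-1}^1 P_n'P_t\,\d z$, and a single integration by parts gives $\int_{-1}^1 P_n'P_t\,\d z = P_n(1)P_t(1)-P_n(-1)P_t(-1)-\int_{-1}^1 P_nP_t'\,\d z$, where the last integral vanishes because $\deg P_t'<n$ and the boundary terms equal $1-(-1)^{n+t}$ by (i). This exhibits $P_n'$ as the asserted Legendre-basis sum, supported on the indices $t$ of parity opposite to $n$.

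The only genuinely nontrivial point — and the step I expect to be the main obstacle — is the uniform bound (iii). I would first extract from the Rodrigues formula the Legendre ODE: with $u=(z^2-1)^n$ one has $(z^2-1)u'=2nz\,u$, and differentiating this identity $n+1$ times by the Leibniz rule collapses it to $(1-z^2)P_n''-2zP_n'+n(n+1)P_n=0$. For $n\ge 1$ I would then introduce the ``energy'' $E_n(z)=P_n(z)^2+\frac{1-z^2}{n(n+1)}P_n'(z)^2$; substituting $(1-z^2)P_n''=2zP_n'-n(n+1)P_n$ into $E_n'$ makes it telescope to $E_n'(z)=\frac{2z}{n(n+1)}P_n'(z)^2$, which is $\le 0$ on $[-1,0]$ and $\ge 0$ on $[0,1]$. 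Hence $E_n$ attains its maximum on $[-1,1]$ at an endpoint, where the $(1-z^2)$ factor drops and $E_n(\pm 1)=P_n(\pm1)^2=1$; since $P_n(z)^2\le E_n(z)\le 1$ on $[-1,1]$, we get $|P_n(z)|\le 1$ (the case $n=0$ being trivial). An alternative route to (iii) is via the generating function: after checking that the Rodrigues polynomials satisfy the three-term recurrence, hence equal the coefficients of $(1-2zt+t^2)^{-1/2}$, write $z=\cos\theta$, factor $1-2zt+t^2=(1-te^{i\theta})(1-te^{-i\theta})$, expand both factors with the nonnegative binomial coefficients $a_k=\binom{2k}{k}/4^k$, and read off $P_n(\cos\theta)=\sum_{k=0}^n a_k a_{n-k}\cos((2k-n)\theta)$, whose modulus is at most $\sum_{k=0}^n a_k a_{n-k}=P_n(1)=1$.
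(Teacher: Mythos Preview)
The paper does not prove this statement at all; it is quoted as a known fact with a reference to Szeg\H{o} and then used as a black box. Your proposal goes further and supplies standard, correct textbook derivations of (i)--(iii) from the Rodrigues formula, and both routes you sketch for the nontrivial bound (iii) --- the energy argument via the Legendre ODE and the generating-function expansion at $z=\cos\theta$ --- are valid.

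One point worth flagging concerns (iv). Your computation is correct, but it does not actually reproduce the formula as printed. You obtain $\int_{-1}^1 P_n' P_t\,\d z = 1-(-1)^{n+t}$, so the Legendre coefficient of $P_t$ in $P_n'$ is $\frac{2t+1}{2}\bigl(1-(-1)^{n+t}\bigr)$, which equals $2t+1$ when $n+t$ is odd and $0$ otherwise; you note this parity restriction yourself. The identity as stated in the paper (coefficient $\frac{2t+1}{2}$ for \emph{every} $0\le t\le n$) is in fact false already at $n=1$, where it would give $P_1'=\frac12+\frac32 z$ instead of $1$. This is a misprint in the paper rather than a flaw in your argument; the paper's only use of (iv) is the crude bound $|P_n'(z)|=O(n^2)$ on $[-1,1]$, which your corrected version delivers just as well.
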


Using the Legendre polynomials, we can construct a function for which the first $k+1$
moments are zero and which has non-trivial correlation with the ReLU function.

 \begin{proof}[Proof of Proposition~\ref{prop:relu}]
	Define $f(z)=c\frac{\relu(z)-p(z)}{\phi(z)}\1\{z\in [-1,1]\}$, for a degree-$k$ polynomial
	$p(z)$ and a constant $c>0$. Then, we have
	\[ \E_{z \sim \normal(0,1)}[f(z)z^t]= c \int_{-1}^{1} (\relu(z)-p(z) )z^t\d
	z\;.\]
	We want $ \E_{z \sim \normal(0,1)}[f(z)z^t]=0$, thus we want to find a polynomial
	$p(z)$ such that
	\begin{equation}\label{eq:moment_bound}
	\int_{-1}^{1} \relu(z)z^t\d z=\int_{-1}^{1} p(z) z^t\d z\;. 
	\end{equation}Equation~\eqref{eq:moment_bound} is equivalent to saying that for all $0\leq t<k$, it holds
	\begin{equation}\label{eq:moment_bound2}
	\int_{-1}^{1} \relu(z)P_t(z)\d z=\int_{-1}^{1} p(z) P_t(z)\d z\;,
	\end{equation}
	because the Legendre polynomials of degree at most $k$
	span the space of polynomials of degree at most $k$. Using
	Fact~\ref{fct:legendre} (ii) and a standard computation involving
	orthogonal polynomials, gives that for 
	$p(z)=\sum_{t=0}^k \frac{2t+1}{2} P_t (z)  \int_{-1}^{1} \relu(z)P_t(z)\d z$, 
	Equation~\eqref{eq:moment_bound2} and Equation~\eqref{eq:moment_bound} hold. 
	We want the function $f$ to take values inside the interval $[-1,1]$. 
	To achieve this, we bound from above the constant $c$. It holds that 
	$\int_{-1}^{1} \relu(z)P_t(z)\d z\leq 2$, where we used Fact~\ref{fct:legendre} (iii) and 
	$|\relu(z)|\leq 1$ for $|z|\leq 1$. Moreover, we get that 
	\[
	|p(z)|\leq 2\sum_{t=0}^k \frac{2t+1}{2} |P_t (z)|\leq  k^2+2k\leq 2k^2 \;, 
	\]
	for all $|z|\leq 1$. Thus, it must hold that $c\leq g(1)/(2k^2+1)$, 
	and by taking $c=g(1)/(2k^2+1)$, we get that $|f(z)|\leq 1$.
	
	Next we prove that $\E_{z \sim \normal(0,1)}[f(z)\relu(z)]>1/\poly(k)$. We have
	that
	\[
	\E_{z \sim \normal(0,1)}[f(z)\relu(z)]=c \int_{-1}^1 \relu(z) (\relu(z)-p(z))\d
	z= c \int_{-1}^1  (\relu(z)-p(z))^2\d z\;,
	\]
	where we used that  $\int_{-1}^1 q(z) (\relu(z)-p(z))\d z=0$, for any polynomial $q$ of degree at most $k$, and thus it
	holds for $q(z)=p(z)$. Note that $|p'(z)|\leq 5 k^4$ and $|p''(z)|\leq 7k^6=:N$, because from
	Fact~\ref{fct:legendre} (iv), we have that  $|P'_n(z)|\leq 2 n^2$ and $|P''_n(z)|\leq 4 n^4$, for all $|z|\leq 1$.
	For $\eps>0$ sufficiently small, we then have
	\[
	\int_{-1}^1  (\relu(z)-p(z))^2\d z\geq \int_{-\eps}^\eps  (\relu(z)-p(z))^2\d z \;.
	\]
	Using the Taylor expansion of $p$, we get that there exists a linear function
	$L$, such that $p(z)=L(z)+ O(N \eps^2)$, for $|z|\leq \eps$. We thus have that
	\[ \int_{-\eps}^\eps  (\relu(z)-p(z))^2\d z=
	\int_{-\eps}^\eps  (\relu(z)-L(z)+O(N \eps^2) )^2\d z\;.\]
	Note that every function can be written as 
	$G(z)=G_{\mathrm{even}}(z) + G_{\mathrm{odd}}(z) $, where $G_{\mathrm{even}}(z)$ is the even part of $G$ and
	$G_{\mathrm{odd}}(z)$ is the odd part. For $\ell>0$, it holds that 
	\[ \int_{-\ell}^{\ell} G^2(z)\d z =  
	\int_{-\ell}^{\ell} \left( G_{\mathrm{even}}^2(z)+G_{\mathrm{odd}}^2(z) + 2G_{\mathrm{even}}(z)G_{\mathrm{odd}}(z) \right)\d z 
	\geq  \int_{-\ell}^{\ell} G_{\mathrm{even}}^2(z)\d z\;,\]
	where we used that $\int_{-\ell}^{\ell}
	G_{\mathrm{even}}(z)G_{\mathrm{odd}}(z)=0$. Using that $\relu(z)= |z|/2  + z/2$, it holds
	\[ \int_{-\eps}^\eps  (\relu(z)-L(z) +O(N \eps^2) )^2\d z 
	\geq  \int_{-\eps}^\eps  (|z|/2-L(0) +O(N \eps^2) )^2\d z\;, 
	\]
where we used that $L$ is linear, thus the even part is $L(0)$. \nnew{Choosing $\eps$ such that 
$N<\eps^{-1}/C$ for a large enough $C>0$, we have that 
$\left| \left|z\right|/2 - L(0)\right|\geq \eps/8$ for at least half of the interval $[-\eps,\eps]$. 
To prove this, note that we have two cases. First, if $L(0)>\eps/2$ or $L(0)\leq 0$, this holds trivially. 
Again in the other case trivially in half the points we have  $| |z|/2 - L(0)|\geq \eps/4$. 
Moreover, from the choice of $\eps$, we have that $N\eps^2\leq \eps/C$, 
thus $\left||z|/2-L(0)+O(N\eps^2)\right|\geq \left|\left|\left|z\right|/2-L(0)\right|- |O(N\eps^2)|\right|\geq \eps/8$ for at least half of the interval. Therefore, 
we have
\[
\int_{-\eps}^\eps  (|z|/2-L(0) +O(N \eps^2) )^2\d z\geq \Omega(\eps^3)\;.
\]
By our choice of $\eps$, we have
\[
c \int_{-1}^1  (\relu(z)-p(z))^2\d z\geq c\cdot \Omega(\eps^3)
\geq c\cdot \Omega(N^{-3})\geq \Omega(1/k^{20}) \;.
\]
This completes the proof.}
\end{proof}

\bibliography{allrefs}
\appendix
\bibliographystyle{alpha}

\appendix

\end{document}